\theoremstyle{plain}
\newtheorem{theorem}{Theorem}[section]
\newtheorem{lemma}[theorem]{Lemma}
\theoremstyle{definition}
\newcommand{\Loss}{\mathcal{L}}
\newcommand{\loss}{\ell}
\newcommand{\err}{\mathrm{err}}
\newcommand{\argmin}{\mathrm{argmin}}
\newcommand{\E}{\mathop\mathbb{E}}
\newcommand{\tr}{\mathrm{tr}}
\newcommand{\distequal}{\overset{d}{=}}
\def\R{\mathbb{R}}
\def\cA{\mathcal{A}}
\def\cF{\mathcal{F}}
\def\cG{\mathcal{G}}
\def\cH{\mathcal{H}}
\def\cL{\mathcal{L}}
\def\cM{\mathcal{M}}
\def\cN{\mathcal{N}}
\def\cO{\mathcal{O}}
\def\cP{\mathcal{P}}
\def\cS{\mathcal{S}}
\def\cW{\mathcal{W}}
\def\cX{\mathcal{X}}
\def\cY{\mathcal{Y}}
\def\vw{\mathbf{w}}
\def\vW{\mathbf{W}}
\def\vtW{\mathbf{\widetilde{W}}}
\def\va{\mathbf{a}}
\def\vX{\mathbf{X}}
\def\vx{\mathbf{x}}
\def\ve{\mathbf{e}}
\def\vs{\mathbf{s}}
\def\vt{\mathbf{t}}
\def\vy{\mathbf{y}}
\newcommand{\norm}[1]{\left\|#1\right\|}
\newcommand{\abs}[1]{\left|#1\right|}
\newcommand{\Exp}[2]{\mathop\mathbb{E}\limits_{#1}\left[ #2 \right]}
\newcommand{\prob}[2]{\mathop\mathbb{P}_{#1}\left[#2\right]}
\newcommand{\indct}[1]{\mathds{1}\left[#1\right]}
\newcommand{\sign}[1]{\textrm{sign}\left[#1\right]}
\newcommand{\eps}{\varepsilon}
\newcommand{\reals}{\mathbb{R}}
\newcommand{\inp}[2]{\left\langle #1,#2\right\rangle}
\newcommand{\fc}{\textsf{FC-NN}}
\newcommand{\cnn}{\textsf{CNN}}
\newcommand{\erm}{\textsf{ERM}}
\newcommand{\reg}{\textsf{REG}}
\newcommand{\gd}{\textsf{GD}}
\newcommand{\nt}{\textsf{NT}}
\newcommand{\gdm}{\textsf{GDM}}
\newcommand{\gdl}{\gd_\Loss}
\newcommand{\nxy}{\left\{\vx_i,y_i\right\}_{i=1}^n}
\newcommand{\ngxy}{\left\{g(\vx_i),y_i\right\}_{i=1}^n}
\newtheorem{thm}{Theorem}[section]
\newtheorem{lem}[thm]{Lemma}
\newtheorem{cor}[thm]{Corollary}
\newtheorem{defn}{Definition}[section]
\newtheorem{rem}{Remark}[section]
\newcommand{\SOd}{\cS\cO(d)}
\newcommand{\sod}{\mathfrak{so}(d)}
\newcommand{\vol}{\textrm{vol}}
\newcommand{\vcd}{\textrm{VCdim}}
\DeclareMathOperator*{\Variance}{Var}
\newcommand{\Var}[2]{\Variance_{#1}\left[#2\right]}
\title{Why Are Convolutional Nets More Sample-Efficient than Fully-Connected Nets?}
\author{%
Zhiyuan Li,\quad Yi Zhang \\
  Princeton University\\
  \texttt{zhiyuanli,y.zhang@cs.princeton.edu} \\
 \And
Sanjeev Arora\\
  Princeton University \& IAS\\
  \texttt{arora@cs.princeton.edu} 
}
\begin{document}

\maketitle

\begin{abstract}
Convolutional neural networks often dominate fully-connected counterparts in generalization performance, especially on image classification tasks. This is often explained in terms of \textquotedblleft better inductive bias.\textquotedblright\  However, this has not been made mathematically rigorous, and the hurdle is that the sufficiently wide fully-connected net can always simulate the convolutional net. Thus the training algorithm plays a role. The current work describes a natural task on which a provable sample complexity gap can be shown, for standard training algorithms. We construct a single natural distribution on $\mathbb{R}^d\times\{\pm 1\}$ on which any orthogonal-invariant algorithm (i.e. fully-connected networks trained with most gradient-based methods from gaussian initialization) requires $\Omega(d^2)$ samples to generalize while $O(1)$ samples suffice for convolutional architectures. Furthermore, we demonstrate a single target function, learning which on all possible distributions leads to an $O(1)$ vs $\Omega(d^2/\varepsilon)$ gap. The proof relies on the fact that SGD on fully-connected network is orthogonal equivariant. Similar results are achieved for $\ell_2$ regression and adaptive training algorithms, e.g. Adam and AdaGrad, which are only permutation equivariant.

\end{abstract}

\section{Introduction}
\label{sec:intro}

Deep convolutional nets (\textquotedblleft ConvNets\textquotedblright) are at the center of the deep learning revolution~\citep{krizhevsky2012imagenet,he2016deep, huang2017densely}.  For many tasks, especially in vision, convolutional architectures perform significantly better their fully-connected (\textquotedblleft FC\textquotedblright) counterparts, at least given the same amount of training data.  Practitioners explain this phenomenon at an intuitive level by pointing out that convolutional architectures have better \textquotedblleft inductive bias\textquotedblright, which intuitively means the following: (i) ConvNet is a better match to the underlying structure of image data, and thus are able to achieve low training loss with far fewer parameters (ii) models with {\em fewer} total number of parameters generalize better. 

Surprisingly, the above intuition about the better inductive bias of ConvNets over 
FC nets has never been made mathematically rigorous. The natural way to make it rigorous would be to show explicit learning tasks that require far more training samples on FC nets than for ConvNets. (Here \textquotedblleft task\textquotedblright means, as usual in learning theory,   a distribution on data points, and binary labels for them generated given using a fixed labeling function.) Surprisingly, the standard repertoire of lower bound techniques in ML theory does not seem capable of demonstrating such a separation. The reason is that any ConvNet can be simulated by an FC net of sufficient width, since a training algorithm can just zero out unneeded connections and do weight sharing as needed.  Thus the key issue is not an expressiveness {\em per se}, but the combination of {\em architecture plus the training algorithm}. But if the training algorithm must be accounted for, the usual hurdle arises that we lack good mathematical understanding of the dynamics of deep net training (whether FC or ConvNet). How then can one establish the limitations of \textquotedblleft FC nets + current training algorithms\textquotedblright? (Indeed, many lower bound techniques in PAC learning theory are information theoretic and  ignore the training algorithm.) 

The current paper makes significant progress on the above problem by exhibiting simple tasks that require $\Omega(d^2)$ factor more  training samples for FC nets than for ConvNets, where $d$ is the data dimension. (In fact this is shown even for 1-dimensional ConvNets; the lowerbound easily extends to 2-D ConvNets.)  The lower bound holds for FC nets trained with any of the popular algorithms listed in \Cref{tb:equivariance}. (The reader can concretely think of vanilla SGD with Gaussian initialization of network weights, though the proof allows use of momentum,  $\ell_2$ regularization, and various learning rate schedules.)
Our proof relies on the fact that these popular algorithms lead to an \emph{orthogonal-equivariance} property on the trained FC nets, which says that at the end of training the FC net  ---no matter how deep or how  wide ---  will make the same predictions even if we apply  orthogonal transformation on all datapoints (i.e., both training and test).
This notion is inspired by \cite{ng2004feature} (where it is named ``orthogonal invariant''), which showed the power of logistic regression with $\ell_1$ regularization versus other learners.  For a variety of learners (including kernels and FC nets) that paper described explicit tasks  where the learner has $\Omega(d)$ higher sample complexity  than  logistic regression with $\ell_1$ regularization. The lower bound example and technique can also be extended to show a (weak) separation between FC nets and ConvNets. (See Section \ref{subsec:second_warmup_example})

Our separation is quantitatively stronger than the result one gets using \cite{ng2004feature} because the sample complexity gap is $\Omega(d^2)$ vs $O(1)$, and not $\Omega(d)$ vs $O(1)$. But in a more subtle way our result is conceptually far stronger: the technique of \cite{ng2004feature} seems incapable of exhibiting a sample gap of more than $O(1)$ between Convnets and FC nets in our framework. The reason is that the technique of \cite{ng2004feature} can exhibit  a hard task for FC nets only after {\em fixing} the training algorithm.  But there are infinitely many training algorithms once we account for hyperparameters associated in various epochs with  LR schedules, $\ell_2$ regularizer and momentum. Thus \cite{ng2004feature}'s technique cannot exclude  the possibility that the hard task for \textquotedblleft FC net + Algorithm 1\textquotedblright\  is easy for \textquotedblleft FC net + Algorithm 2\textquotedblright . Note that we do not claim any  issues  with the results claimed in~\cite{ng2004feature}; merely that the technique cannot  lead to a proper separation  between ConvNets and FC nets, when the FC nets are allowed to be trained with any of the infinitely many training algorithms.
(Section \ref{subsec:second_warmup_example} spells out in more detail the technical difference between our technique and Ng's idea.) 


The reader may now be wondering what is the single task that is easy for ConvNets  but  hard for FC nets trained with any standard  algorithm? A simple example is the following: data distribution in $\reals^d$ is standard Gaussian, and  target labeling function is the sign of $\sum_{i=1}^{d/2} x_i^2 -\sum_{i=d/2+1}^{d}x_i^2$. Figure~\ref{fig:fc_vs_cnn} shows that this task is indeed much more difficult for FC nets. Furthermore, the task is also hard in practice for data distributions other than Gaussian; the figure shows that a sizeable performance gap exists even on CIFAR images with such a target label.

\begin{figure}[t]
\vspace{-1.5cm}
\hspace{-1.5cm}\includegraphics[width=1.2\textwidth]{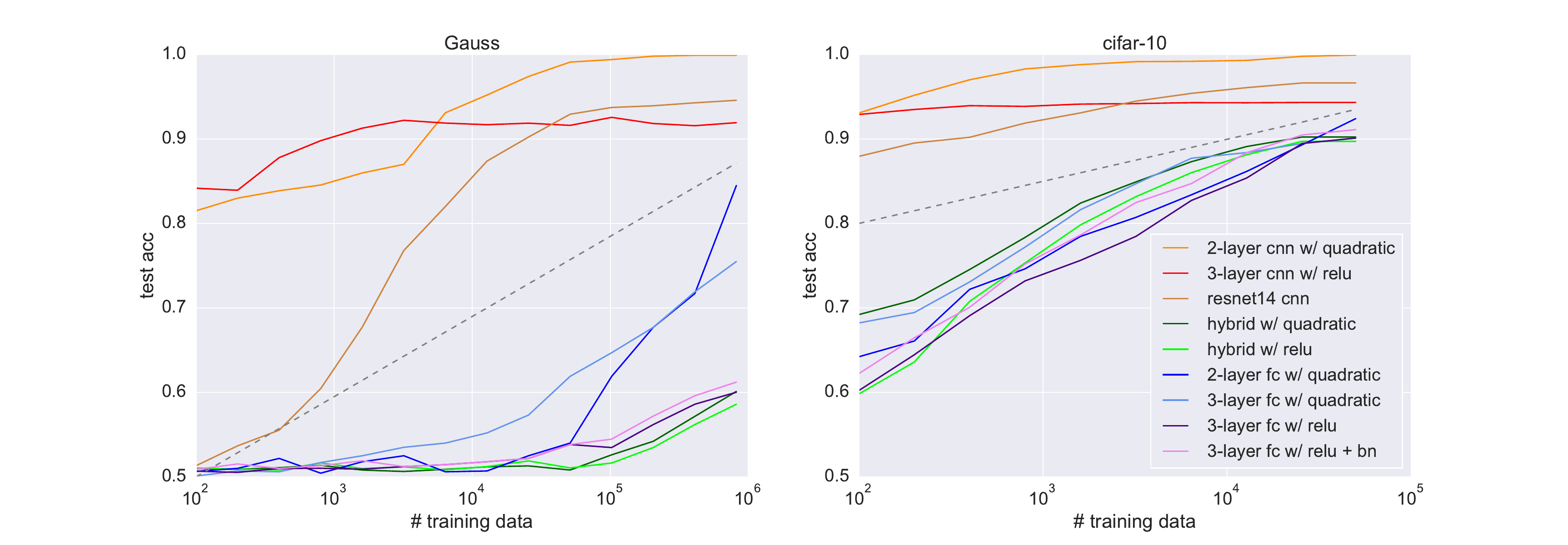}
\caption{\small Comparison of generalization performance of convolutional versus fully-connected models trained by SGD. The grey dotted lines indicate separation, and we can see convolutional networks consistently outperform fully-connected networks. Here the input data are $3\times32\times32$ RGB images and the binary label indicates for each image whether the first channel has larger $\ell_2$ norm than the second one. The input images are drawn from entry-wise independent Gaussian (left) and CIFAR-10 (right). In both cases, the $3$-layer convolutional networks consist of two $3\times 3$ convolutions with $10$ hidden channels, and a $3\times 3$ convolution with a single output channel followed by global average pooling. The $3$-layer fully-connected networks consist of two fully-connected layers with $10000$ hidden channels and another fully-connected layer with a single output. The $2$-layer versions have one less intermediate layer and have only $3072$ hidden channels for each layer. The hybrid networks consist of a single fully-connected layer with $3072$ channels followed by two convolutional layers with 10 channels each. \textbf{bn} stands for batch-normalization~\cite{ioffe2015batch}. }
\vspace{-0.4cm}  
\label{fig:fc_vs_cnn}
\end{figure}

\noindent{\bf Extension to broader class of algorithms.} The \emph{orthogonal-equivariance} property holds for many types of practical training algorithms, but not all. Notable exceptions are adaptive gradient methods (e.g.\ Adam and AdaGrad), $\ell_1$ regularizer, and initialization methods that are not spherically symmetric. To prove a lower bound against FC nets with these algorithms, we identify a property, \emph{permutation-invariance}, which is satisfied by nets trained using such algorithms.  We then  demonstrate a \emph{single} and \emph{natural} task on $\R^d\times\{\pm 1\}$ that resembles real-life image texture classification, on which we prove any permutation-invariant learning algorithm requires $\Omega(d)$ training examples to generalize, while Empirical Risk Minimization with $O(1)$ examples can learn a convolutional net. 

\noindent \textbf{Paper structure.} In Section~\ref{sec:related} we discuss about related works.  In section~\ref{sec:notation}, we define the notation and terminologies. In \Cref{sec:warmup_and_overview}, we give two warmup examples and an overview for the proof technique for the main theorem.   In Section~\ref{sec:lower_bounds}, we present our main results on the lower bound of orthogonal and permutation equivariant algorithms.

\section{Related Works}\label{sec:related}


\cite{du2018many} attempted to investigate the reason why convolutional nets are more sample efficient. Specifically they prove $O(1)$ samples suffice for learning a convolutional filter and also proved a $\Omega(d)$ \emph{min-max lower bound} for learning the class of linear classifiers. Their lower bound is against learning a class of distributions, and their work fails to serve as a sample complexity separation, because their upper and lower bounds are proved on \emph{different} classes of tasks.

\cite{arjevani2016iteration} also considered the notion of distribution-specific hardness of learning neural nets. They focused on proving running time complexity lower bounds against so-called "orthogonally invariant" and "linearly invariant" algorithms. However, here we focus on sample complexity.

Recently, there has been progress in showing lower bounds against learning with kernels.~\cite{wei2019regularization} constructed a single task on which they proved a sample complexity separation between learning with neural networks vs. with neural tangent kernels. Notably the lower bound is specific to neural tangent kernels~\citep{jacot2018neural}. Relatedly, ~\cite{allen2019can} showed a sample complexity lower bound against all kernels for a family of tasks, i.e., learning $k$-XOR on the hypercube.

\label{sec:notation}

\section{Notation and Preliminaries}\label{sec:notation}

We will use $\cX=\reals^d$, $\cY=\{-1,1\}$ to denote the domain of the data and label and $\cH = \{h\mid h:\cX\to \cY\}$ to denote the hypothesis class. Formally, given a joint distribution $P$,  the error of a hypothesis $h\in\cH$ is defined as $\err_P(h) := \prob{\vx,y\sim P}{h(\vx)\neq y}$. If $h$ is a random hypothesis, we define $\err_P(h) := \prob{\vx,y\sim P,h}{h(\vx)\neq y}$ for convenience. A class  of joint distributions supported on $\cX\times\cY$ is referred  as a \emph{problem}, $\cP$. 

We use $\norm{\cdot}_2$ to denote the spectrum norm and $\norm{\cdot}_F$ to denote the Frobenius norm of a matrix. We use $A\le B$ to denote that $B-A$ is a semi-definite positive matrix. We also use $\cO(d)$ and $\cG\cL(d)$ to denote the $d$-dimensional orthogonal group and general linear group respectively. We use $B_p^{d^2}$ to denote the unit \emph{Schatten-p} norm ball in $\reals^{d\times d}$.

We use $N(\mu,\Sigma)$ to denote Gaussian distribution with mean $\mu$ and covariance $\Sigma$. For random variables $X$ and $Y$, we denote $X$ is equal to $Y$ in distribution by $X\distequal Y$.  
In this work, we also always use $P_\cX$ to denote the distributions on $\cX$ and $P$ to denote the distributions supported jointly on $\cX\times \cY$. 
Given an input distribution $P_\cX$  and a hypothesis $h$, we define $P_\cX\diamond h$ as the joint distribution on $\cX\times\cY$, such that $(P_\cX\diamond h)(S) = P_\cX(\{\vx| (\vx,h(\vx))\in S\}),\ \forall S\subset \cX\times\cY$.
 In other words, to sample $(X,Y)\sim P_\cX\diamond h$ means to first sample $X\sim P_\cX$, and then set $Y=h(X)$. 
 For a family of input distributions $\cP_\cX$ and a hypothesis class $\cH$, we define $\cP_\cX\diamond \cH = \{P_\cX\diamond h \mid P_\cX \in \cP_\cX, h\in\cH  \}$. {In this work all joint distribution $P$ can be written as $P_\cX\diamond h$ for some $h$, i.e. $P_{\cY|\cX}$ is deterministic.}

For set $S\subset \cX$ and 1-1 map $g:\cX\to\cX$, we define $g(S)=\{g(x)|x\in S\}$. We use $\circ$ to denote function composition. $(f\circ g)(x)$ is defined as $f(g(x))$, and for function classes $\cF$, $\cG$, $\cF\circ\cG = \{f\circ g\mid f\in\cF,g\in \cG\}$. For any distribution $P_\cX$ supported on $\cX$ , we define $P_\cX\circ g$ as the distribution such that $(P_\cX\circ g)(S) = P_\cX(g(S))$. In other words, if $X\sim P_\cX \Longleftrightarrow g^{-1}(X)\sim P_\cX\circ g$, because 
\[\forall S\subseteq \cX,\quad \prob{X\sim P_\cX}{g^{-1}(X)\in S} = \prob{X\sim P_\cX}{X\in g(S)} = [P_\cX\circ g](S).\] 
For any joint distribution $P$ of form $P= P_\cX\diamond h$, we define $P\circ g = (P_\cX\circ g)\diamond (h\circ g)$. In other words, $(X,Y)\sim P\Longleftrightarrow (g^{-1}(X),Y)\sim P\circ g$.
For any distribution class $\cP$ and group $\cG$ acting on $\cX$, we define $\cP\circ \cG$ as $\{P\circ g\mid P\in\cP,g\in \cG\}$.

\begin{defn}
	A deterministic supervised \emph{Learning Algorithm} $\cA$ is a mapping from a sequence of training data,  $\{(\vx_i,y_i)\}_{i=1}^n\in (\cX\times\cY)^n$,  to a hypothesis $\cA(\{(\vx_i,y_i)\}_{i=1}^n)\in \cH \subseteq \cY^\cX $. The algorithm $\cA$ could also be randomized, in which case the output $\cA(\{(\vx_i,y_i)\}_{i=1}^n)$ is a distribution on hypotheses. Two randomized algorithms $\cA$ and $\cA'$ are the same if for any input, their outputs have the same distribution in function space, which is denoted by $\cA(\nxy)\distequal\cA'(\nxy)$.
\end{defn}

\begin{defn}[Equivariant Algorithms]\label{defi:eqvariance}
	
	A learning algorithm is \emph{equivariant} under group $\cG_\cX$ (or $\cG_\cX$-equivariant) if and only if for any dataset $ \nxy\in (\cX\times\cY)^n$ and $\forall g\in\cG_\cX,\vx\in\cX$, $ \cA(\ngxy)\circ g = \cA(\nxy)$, or  $ \cA(\ngxy)(g(\vx)) = [\cA(\nxy)](\vx)$.	\footnote{For randomized algorithms, the condition becomes $\cA(\ngxy)\circ g \distequal \cA(\nxy)$, which is stronger than $\cA(\ngxy)(g(\vx)) \distequal [\cA(\nxy)](\vx)$, $\forall \vx\in\cX$. }
\end{defn}
\begin{defn}[Sample Complexity]
	Given a distribution class $\cP$  and a learning algorithm $\cA$, $\delta,\eps\in [0,1]$, we define the $(\eps,\delta)$-\emph{sample complexity}, denoted $\cN(\cA,\cP,\eps,\delta)$, as the smallest integer $n$ such that $\forall P\in\cP$,  w.p. $1-\delta$ over the randomness of $\nxy$ and $\cA$, $\err_{P}(\cA(\nxy))\le \eps$. We also define the $\eps$-expected sample complexity for a problem $\cP$, denoted $\cN^*(\cA,\cP,\eps)$, as the smallest integer $n$ such that $\forall P\in\cP$,  $\Exp{(\vx_i,y_i)\sim P}{\err_{P}(\cA(\nxy))}\le \eps$.\\ By definition, we have $\cN^*(\cA,\cP,\eps+\delta)\le \cN(\cA,\cP,\eps,\delta)\le \cN^*(\cA,\cP,\eps\delta),\ \forall \eps,\delta\in[0,1]$. 
	\end{defn}

\setlength{\textfloatsep}{0.3cm}
\begin{algorithm}[t]
    \caption{Iterative algorithm $\cA$}
    \label{alg:iter}
	\begin{algorithmic}
		\REQUIRE{Initial parameter distribution $P_{init}$ supported in $\cW=\reals^m$, total iterations $T$, training dataset $\nxy$, parametric model $\cM:\cW \to \cH$, iterative update rule $F(\vW,\cM,\nxy)$}
		\ENSURE{Hypothesis $h:\cX \to \cY$. }
		\STATE{Sample $\vW^{(0)}\sim P_{init}$.}
	    \FOR{ $t = 0$ to $T-1$}
	        \STATE $\vW^{(t+1)} = F(\vW^{(t)},\cM,\nxy)$.
	    \ENDFOR
	    \RETURN{$h=\sign{\cM[\vW^{(T)}]}$.}
	\end{algorithmic}
\end{algorithm}

\subsection{Parametric Models and Iterative Algorithms}\label{subsec:para_models}

A parametric model $\cM:\cW\to \cH$ is a functional mapping from weight $\vW$ to a hypothesis $\cM(\cdot):\cX\to \cY$. Given a specific parametric model $\cM$, a general iterative algorithm is defined as Algorithm~\ref{alg:iter}. In this work, we will only use the two parametric models below, $\fc$ and $\cnn$.

\textbf{FC Nets:}
A $L$-layer \emph{Fully-connected Neural Network} parameterized by its weights $\vW = (W_1,W_2,\ldots, W_L)$ is a function $\fc[\cdot]:\reals^d \to \reals$, where $W_i\in \reals^{d_{i-1}\times d_i}$, $d_0 = d$, and $d_L = 1$:
	\begin{equation*}
		\fc[\vW](\vx) = W_L\sigma(W_{L-1} \cdots \sigma(W_2\sigma(W_1 \vx))).
	\end{equation*}
Here, $\sigma:\reals\to \reals$ can be any function, and we abuse the notation such that $\sigma$ is also defined for vector inputs, in the sense that $[\sigma(\vx)]_i = \sigma(x_i)$.

\textbf{ConvNets (CNN):}
In this paper we will only use two layer Convolutional Neural Networks with one channel. Suppose $d=d'r$ for some integer $d',r$, a 2-layer CNN parameterized by its weights $\vW=(\vw ,\va, b) \in \reals^{k}\times\reals^r\times \reals$ is a function $\cnn[\cdot]:\reals^d\to \reals$:
	\begin{equation*}
	\cnn[\vW](\vx) = \sum_{i=1}^r a_r \sigma([\vw*\vx]_{d'(i-1)+1:d'i}) + b,
	\end{equation*}	
	where $* :\reals^k \times \reals^d\to\reals^d$ is the convolution operator, defined as $[\vw*\vx]_i = \sum_{j=1}^k w_jx_{[i-j-1 \textrm{ mod } d]+1}$, and $\sigma:\reals^{d'}\to \reals$ is the composition of pooling and element-wise non-linearity.

\subsection{Equivariance and training algorithms}

This section gives an informal sketch of why FC nets trained with standard algorithms have certain equivariance properties. 
The high level idea here is  if update rule of the network, or more generally, the parametrized model, exhibits certain symmetry per step, i.e., property 2 in \Cref{thm:main_equivariance}, then by induction it will hold till the last iteration. 

Taking linear regression as an example, let $\vx_i\in \reals^{d},i\in[n]$ be the data  and $\vy\in\reals^n$ be the labels, the GD update for $\Loss(\vw) = \frac{1}{2}\sum_{i=1}^n(\vx_i^\top\vw  - y_i)^2= \frac{1}{2}\norm{\vX^\top\vw - \vy}_2^2$ would be $\vw_{t+1} = F(\vw_{t},\vX,\vy) := \vw_t - \eta \vX(\vX^\top\vw_t -\vy)$. Now suppose there's another person trying to solve the same problem using GD with the same initial linear function, but he observes everything in a different basis, i.e., $\vX' = U\vX $ and $\vw'_0 = U\vw_0$, for some orthogonal matrix $U$. Not surprisingly, he would get the same solution for GD, just in a different basis.  Mathematically, this is because $\vw'_t  =U\vw_t\Longrightarrow  \vw'_{t+1}=F(\vw'_{t},U\vX,\vy)= UF(\vw_{t},\vX,\vy) =  U\vw_{t+1} $. In other words, he would make the same prediction for unseen data. Thus if the initial distribution of $\vw_0$ is the same under all basis (i.e., under rotations), e.g., gaussian $N(0,I_d)$, then $ \vw_0 \overset{d}{=} U\vw_0\Longrightarrow  F^t(\vw_{0},U\vX,\vy) = UF^t(\vw_{0},\vX,\vy)$, for any iteration $t$, which means GD for linear regression is orthogonal invariant.

To show orthogonal equivariance for gradient descent on general deep FC nets, it suffices to apply the above argument on each neuron in the first layer of the FC nets. Equivariance for other training algorithms (see \Cref{tb:equivariance}) can be derived in the exact same method. The rigorous statement and the proofs are deferred into \Cref{sec:equiv}.

\begin{table}[t]
\centering
\vspace{-0.8cm}
\begin{tabular}{l||c|c|c|c}
\hline
 Symmetry & Sign Flip&  Permutation & Orthogonal & Linear \\
  \hline
  Matrix Group &   Diagonal, $|M_{ii}|=1$&  Permutation & Orthogonal & Invertible\\
  \hline
  Algorithms& AdaGrad, Adam & AdaGrad, Adam &  SGD Momentum& Newton's method\\
  \hline
  \shortstack {Initialization}&  Symmetric distribution& i.i.d. & i.i.d. Gaussian & All zero\\
  \hline
  Regularization&  $\ell_p$ norm & $\ell_p$ norm & $\ell_2$ norm & None\\
  \hline
\end{tabular}

\caption{Examples of gradient-based equivariant training algorithms for FC networks. The initialization requirement is only for the first layer of the network.}

\label{tb:equivariance}
\end{table}

\section{Warm-up examples and Proof Overview}\label{sec:warmup_and_overview}

\subsection{Example 1:  $\Omega(d)$ lower bound against orthogonal equivariant methods}
\label{sebsec:warmup1}

We start with a simple but insightful example to how  equivariance alone could suffice for some non-trivial lower bounds. 

We consider a task on $\R^{d}\times\{\pm1\}$ which is a uniform distribution on the set $\{(\ve_i y,y)| i\in\{1,2,\ldots,d\}, y=\pm1\}$, denoted by $P$. Each sample from $P$ is a one-hot vector in $\R^d$ and the sign of the non-zero coordinate determines its label. Now imagine our goal is to learn this task using an algorithm $\mathcal{A}$. After observing a training set of $n$ labeled points $S:=\{(\vx_i,y_i)\}_{i=1}^n$, the algorithm is asked to make a prediction on an unseen test data $\vx$, i.e., $\mathcal{A}(S)(\vx)$. Here we are concerned with \emph{orthogonal equivariant} algorithms \textemdash\textemdash the prediction of the algorithm on the test point remains the same even if we rotate every $x_i$ and the test point $x$ by any orthogonal matrix $R$, i.e., 
\begin{align*}
    \mathcal{A}(\{(R \vx_i,y_i)\}_{i=1}^n)(R \vx) \overset{d}{=} \mathcal{A}(\{(\vx_i,y_i)\}_{i=1}^n)(\vx)
\end{align*}

Now we show this algorithm fails to generalize on task $P$, if it observes only $d/2$ training examples. The main idea here is that, for a fixed training set $S$, the prediction $\mathcal{A}(\{(\vx_i,y_i)\}_{i=1}^n)(\vx)$ is determined solely by the inner products between $\vx$ and $\vx_i$'s due to orthogonal equivariance, i.e., there exists a  random function $f$ (which may depend on $S$) such that\footnote{this can be made formal using the fact that Gram matrix determine a set of vectors up to an orthogonal transformation.}
\begin{align*}
\mathcal{A}(\{(\vx_i,y_i)\}_{i=1}^n)(\vx) \overset{d}{=} f(\vx^\top\vx_1, \ldots, \vx^\top\vx_n)
\end{align*}
But the input distribution for this task is supported on 1-hot vectors. Suppose $n<d/2$. Then at test time the probability is at least $1/2$ that the new data point  $(\vx, y)\sim P$, is such that $\vx$ has zero inner product with all $n$ points seen in the training set $S$. This fact alone fixes the prediction of $\mathcal{A}$ to the value $f(0,\ldots,0)$  whereas $y$ is independently and randomly chosen to be $\pm 1$. We conclude that  $\mathcal{A}$ outputs the wrong answer with probability at least $1/4$.

\subsection{Example 2:  $\Omega(d^2)$ lower bound  in the weak sense}\label{subsec:second_warmup_example}

The warm up example illustrates the main insight of \citep{ng2004feature}, namely, 
that when an orthogonal equivariant algorithm is used to do learning on a certain task,  it is actually being forced to simultaneously learn all  orthogonal transformations of this task. Intuitively, this should make the learning much more sample-hungry compared to even Simple SGD on ConvNets, which is not orthogonal equivariant. Now we sketch why the obvious way to make this intuition precise using VC dimension (\Cref{thm:VC}) does not give a proper separation between ConvNets and FC nets, as mentioned in the Introduction.

We first fix the ground truth labeling function $h^* =\sign{\sum_{i=1}^d  x_i^2 - \sum_{i=d+1}^{2d} x_i^2}$. Algorithm $\cA$ is orthogonal equivariant (\Cref{defi:eqvariance}) means that for any task $P = P_\cX \diamond h^*$, where $P_\cX$ is the input distribution and $h^*$ is the labeling function, $\cA$ must have the same performance on $P$ and its rotated version $P\circ U = (P_\cX \circ U)\diamond (h^*\circ U)$, where $U$ can be any orthogonal matrix.  Therefore if there's an orthogonal equivariant learning algorithm $\cA$ that learns $h^*$ on all distributions, then $\cA$ will also learn every the rotated copy of $h^*$, $h^*\circ U$, on every distribution $P_\cX$, simply because  $\cA$ learns $h^*$ on distribution $P_\cX\circ U^{-1}$.  Thus $\cA$ learns the class of labeling functions $h^*\circ \cO(d): = \{h(\vx) = h^*(U(\vx))\mid U\in\cO(d)\}$ on all  distributions. (See formal statement in \Cref{thm:augment_function}) By the standard lower bounds with VC dimension (See \Cref{thm:VC}), it takes  at least $\Omega(\frac{\vcd(\cH\circ \cO(d))}{\eps})$ samples for $\cA$ to guarantee $1-\eps$ accuracy. Thus it suffices to show the VC dimension $\vcd(\cH\circ \cO(d)) = \Omega(d^2)$, towards a $\Omega(d^2)$ sample complexity lower bound. (\cite{ng2004feature} picks a linear thresholding function as $h^*$, and thus $\vcd(h^*\circ \cO(d))$ is only $O(d)$.)

	Formally, we have the following theorem, whose proof is deferred into \Cref{appsec:proof_single_function}:

\begin{restatable}[All distributions, single hypothesis]{thm}{singlefunction}\label{thm:single_function}
	Let $\cP = \{\textrm{all distributions}\} \diamond \{h^*\}$. For any orthogonal equivariant algorithms $\cA$, $\cN(\cA, \cP, \eps,\delta) = \Omega((d^2+\ln\frac{1}{\delta})/\eps)$, while there's a 2-layer ConvNet architecture, such that $\cN(\erm_{\cnn},\cP,\eps,\delta) = O\left(\frac{1}{\eps}\left( \log\frac{1}{\eps}+ \log\frac{1}{\delta}\right)\right)$.
\end{restatable}

As noted in the introduction, this doesn't imply there is some task hard for every training algorithm for the FC net. The VC dimension based lower bound implies for each  algorithm $\cA$ 
the existence of a fixed distribution $P_\cX \in \cP$ and some orthogonal matrix $U_\cA$ such that the task $(P_\cX\circ U_\cA^{-1})\diamond h^*$ is hard for it. 
However, this does not preclude  $(P_\cX\circ U_\cA^{-1})\diamond h^*$ being easy for some other algorithm $\cA'$.

\subsection{Proof overview for fixed distribution lower bounds}

At first sight, the issue highlighted above (and in the Introduction) seems difficult to get around. One possible avenue is if the hard input distribution  $P_\cX$ in the task were invariant under all orthogonal transformations, i.e., $P_\cX = P_\cX \circ U$ for all orthogonal matrices $U$. Unfortunately, the distribution constructed in the proof of lower bound with VC dimension is inherently discrete and cannot be made invariant to orthogonal transformations. 

Our proof uses a fixed $P_\cX$, the standard Gaussian distribution, which is indeed invariant under orthogonal transformations.  The proof also uses the Benedek-Itai's lower bound, \Cref{thm:benedek}, and  the main technical part of our proof is the lower bound for the the packing number $D(\cH,\rho,\eps)$ defined below (also see \Cref{eq:long_bound}).

For function class $\cH$, we use $\Pi_\cH(n)$ to denote the \emph{growth function} of $\cH$, i.e. 
\(\Pi_\cH(n):= \sup\limits_{x_1,\ldots,x_n\in \cX}\left|\left\{ (h(x_1),h(x_2),\ldots, h(x_n)) \mid h\in \cH\right\}\right|.\) Denote the VC-Dimension of $\cH$ by $\vcd(\cH)$, by Sauer-Shelah Lemma, we know $\Pi_\cH(n) \le \left(\frac{en}{\vcd(\cH)}\right)^{\vcd(\cH)}$ for  $n\ge \vcd(\cH)$.

Let $\rho$ be a metric on $\cH$, We define $N(\cH,\rho,\eps)$ as the \emph{$\eps$-covering number} of $\cH$ w.r.t. $\rho$, and $D(\cH,\rho,\eps)$ as the \emph{$\eps$-packing number} of $\cH$ w.r.t. $\rho$. For distribution $P_\cX$, we use $\rho_\cX(h,h'):= \prob{X\sim P_\cX}{h(X)\neq h'(X)}$ to denote the discrepancy between hypothesis $h$ and $h'$ w.r.t. $P_\cX$.


\begin{thm}\label{thm:benedek}[Benedek-Itai's lower bound]
	If there's an algorithm $\cA$ that $(\eps,\delta)$-learns $\cH$ with $n$ i.i.d. samples from a fixed distribution $P_\cX$, it must hold for every 
	\begin{equation}\label{eq:benedek_bound}
		  \Pi_\cH(n) \ge (1-\delta)D(\cH, \rho_{\cX}, 2\eps)	
	\end{equation}
Since $\Pi_\cH(n)\le 2^n$, we have $\cN(\cA,P_\cX\diamond \cH, \eps,\delta)\ge \log_2 D(\cH, \rho_{\cX}, 2\eps)	 + \log_2 (1-\delta)$, which is the original bound from \cite{benedek1991learnability}. Later \cite{phillong1995learninghalfspace} improved this bound for the regime $n\ge \vcd(\cH)$ using Sauer-Shelah lemma, i.e.,
\begin{equation}\label{eq:long_bound}
\cN(\cA,P_\cX,\eps,\delta)\ge \frac{\vcd(\cH)}{e}\left((1-\delta)D(\cH,\rho_\cX,2\eps) \right)	^\frac{1}{\vcd(\cH)}.
\end{equation}
\end{thm}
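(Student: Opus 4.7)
I would use the classical reduction-to-packing argument. Let $M:=D(\cH,\rho_\cX,2\eps)$ and fix a maximal $2\eps$-packing $\{h_1,\ldots,h_M\}\subseteq\cH$, so $\rho_\cX(h_i,h_j)>2\eps$ whenever $i\neq j$. For each $i$ I feed $\cA$ samples from the task $P_i:=P_\cX\diamond h_i$. Since $\cA$ is assumed to $(\eps,\delta)$-learn $\cH$ from $n$ i.i.d.\ samples, for every $i$,
\[
\Pr\bigl[\,\rho_\cX(h^{(i)},h_i)\le\eps\,\bigr]\ \ge\ 1-\delta,
\]
where $h^{(i)}$ is $\cA$'s output on $n$ samples from $P_i$ and the probability is over the samples and $\cA$'s internal randomness. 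Summing gives $\sum_{i=1}^M \Pr[\rho_\cX(h^{(i)},h_i)\le\eps]\ \ge\ M(1-\delta)$.

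Next I would bound the same sum from above by $\Pi_\cH(n)$ via two observations. (i) By the triangle inequality, if $\rho_\cX(h,h_i)\le\eps$ and $\rho_\cX(h,h_j)\le\eps$ then $\rho_\cX(h_i,h_j)\le 2\eps$, contradicting the packing property unless $i=j$; thus every hypothesis is $\eps$-close to \emph{at most one} $h_i$. (ii) I couple the $M$ experiments: draw a single tuple $\vx_1,\ldots,\vx_n\sim P_\cX$ together with a single realization of $\cA$'s internal randomness, and use both across all $M$ runs, with only the label vectors $(h_i(\vx_1),\ldots,h_i(\vx_n))$ differing. Conditioned on this coupling, $h^{(i)}$ is a deterministic function of the label vector, and the label vector takes at most $\Pi_\cH(n)$ distinct values in $\{\pm1\}^n$ as $i$ ranges over $\cH$ (this is the definition of the growth function). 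Hence in every realization, $|\{h^{(i)}:i\in[M]\}|\le\Pi_\cH(n)$.

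Combining (i) and (ii) gives the pointwise bound $\sum_{i=1}^M \mathbf{1}[\rho_\cX(h^{(i)},h_i)\le\eps]\le\Pi_\cH(n)$ in every realization of the coupling; taking expectations yields $M(1-\delta)\le\Pi_\cH(n)$, which is exactly the Benedek--Itai inequality. The two stated consequences follow immediately: plugging $\Pi_\cH(n)\le 2^n$ gives the logarithmic form $\cN(\cA,P_\cX\diamond\cH,\eps,\delta)\ge \log_2 D+\log_2(1-\delta)$, while plugging Sauer--Shelah, $\Pi_\cH(n)\le(en/\vcd(\cH))^{\vcd(\cH)}$ for $n\ge\vcd(\cH)$, and solving for $n$ gives Long's refinement. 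The one place where care is needed is the coupling: the samples \emph{and} the algorithmic randomness must be shared across all $M$ experiments so that the only remaining source of variation in $h^{(i)}$ is the label vector --- otherwise the bound $|\{h^{(i)}\}|\le\Pi_\cH(n)$ holds only in distribution, not pointwise, and the swap between summing probabilities and taking an expectation of a pointwise inequality then fails.
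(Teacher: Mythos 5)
Your argument is correct, and it is in fact a complete proof of a statement that the paper itself does not prove: Theorem~\ref{thm:benedek} is imported from \cite{benedek1991learnability} and \cite{phillong1995learninghalfspace}, and the paper only offers an informal ``noisy channel'' intuition --- the learner can extract at most $\log_2 \Pi_\cH(n)$ bits from the labels (since the unlabeled data are independent of the ground truth once $P_\cX$ is fixed), while identifying a member of a $2\eps$-packing requires $\log_2 D(\cH,\rho_\cX,2\eps)$ bits, and information-theoretic inequalities (Fano-type) then give the bound. Your route is the classical combinatorial one, which is essentially how Benedek--Itai and Long actually argue: couple the $M=D(\cH,\rho_\cX,2\eps)$ simulated tasks by sharing the unlabeled sample and the algorithm's internal randomness, observe that the output then depends only on the label vector, which takes at most $\Pi_\cH(n)$ values, and use the triangle inequality to show each output can certify at most one packing element, giving the pointwise inequality $\sum_i \mathbf{1}[\rho_\cX(h^{(i)},h_i)\le\eps]\le\Pi_\cH(n)$ before taking expectations. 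You also correctly isolated the two places where care is needed: the coupling is legitimate precisely because $P_{\cY|\cX}$ is deterministic (as the paper assumes throughout), and the randomness must be shared so the counting bound holds per realization rather than merely in distribution. Compared with the paper's channel-capacity sketch, your argument is more elementary and yields the exact constant-free inequality $(1-\delta)D\le\Pi_\cH(n)$ directly, whereas the information-theoretic view is more conceptual and would extend more gracefully to noisy labels, but as written in the paper it is only a heuristic. The only cosmetic caveat is the packing convention: you need pairwise distances strictly greater than $2\eps$ (or an $\eps$-closeness defined with strict inequality) for the triangle-inequality step; with the standard convention for $D(\cH,\rho_\cX,2\eps)$ this is immaterial.
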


\textbf{Intuition behind Benedek-Itai's lower bound.} We first fix the data distribution as $P_\cX$. Suppose the $2\eps$-packing is labeled as $\{h_1,\ldots,h_{D(\cH,\rho_\cX,2\eps)}\}$ and ground truth is chosen from this $2\eps$-packing, $(\eps,\delta)$-learns the hypothesis $\cH$ means the algorithm is able to recover the index of the ground truth w.p. $1-\delta$. Thus one can think this learning process as a noisy channel which delivers $\log_2 D(\cH,\rho_\cX,2\eps)$ bits of information. Since the data distribution is fixed, unlabeled data is independent of the ground truth, and the only information source is the labels. With some information-theoretic inequalities, we can show the number of labels, or samples (i.e., bits of information) $\cN(\cA,P_\cX\diamond \cH, \eps,\delta)\ge \log_2 D(\cH, \rho_{\cX}, 2\eps)	 + \log_2 (1-\delta)$. A more closer look yields \Cref{eq:long_bound}, because when $\vcd(\cH)<\infty$, then only $\log_2 \Pi_\cH(n)$ instead of $n$ bits information can be delivered.

\section{Lower Bounds}\label{sec:lower_bounds}

Below we first present a reduction from a special subclass of PAC learning to equivariant learning (\Cref{thm:augment_function}), based on which we  prove our main separation results, Theorem \ref{thm:single_function}, \ref{thm:1vsdsq_fixed}, \ref{thm:single_dist_multi_function} and \ref{thm:regression_single_dist_multi_function}. 


\begin{thm}\label{thm:augment_function}
If $\cP_\cX$ is  a set of data distributions  that is invariant under group $\cG_\cX$, i.e., $\cP_\cX\circ\cG_\cX = \cP_\cX$, then the following inequality holds. (Furthermore it becomes an equality when $\cG_\cX$ is a compact group.)
\begin{equation}\label{eq:augment_function}
	\inf_{\cA\in \mathbb{A}_{\cG_\cX}}\cN^*(\cA,\cP_\cX\diamond \cH,\eps) \ge \inf_{\cA\in\mathbb{A}}\cN^*(\cA,\cP_\cX\diamond (\cH\circ\cG_\cX),\eps) 
\end{equation}

\end{thm}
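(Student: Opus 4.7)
The key observation is that an equivariant algorithm $\cA$ cannot distinguish the training sample $(X, h(g(X)))$ drawn from $P_\cX' \diamond (h \circ g)$ from the sample $(g^{-1}(Z), h(Z))$ where $Z = g(X) \sim P_\cX' \circ g^{-1}$, and $\cG_\cX$-invariance of $\cP_\cX$ guarantees that $P_\cX' \circ g^{-1}$ still lies in $\cP_\cX$. This ``change of variables'' reduces learning a task in the enlarged class $\cP_\cX \diamond (\cH \circ \cG_\cX)$ to learning one in $\cP_\cX \diamond \cH$, provided the algorithm is equivariant. The equality direction, in the compact case, goes the other way by Haar symmetrization of an arbitrary algorithm.

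For the inequality, I would fix any $\cA \in \mathbb{A}_{\cG_\cX}$ and any target $P = P_\cX' \diamond (h \circ g) \in \cP_\cX \diamond (\cH \circ \cG_\cX)$. Setting $Z_i := g(X_i)$, a random training sample $S_P \sim P^n$ can be rewritten as $\{(g^{-1}(Z_i), h(Z_i))\}_{i=1}^n$ with $Z_i$ i.i.d.\ from $P_\cX'' := P_\cX' \circ g^{-1}$. By invariance, $P_\cX'' \in \cP_\cX$, so $Q := P_\cX'' \diamond h \in \cP_\cX \diamond \cH$ and $S_Q := \{(Z_i, h(Z_i))\}_{i=1}^n$ has the law of an i.i.d.\ sample from $Q$. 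Applying Definition~\ref{defi:eqvariance} with $g^{-1}$ in place of $g$ gives $\cA(S_P)(g^{-1}(z)) \distequal \cA(S_Q)(z)$ for every $z$; substituting $z = g(X)$ in the definition of $\err_P(\cA(S_P))$ yields $\err_P(\cA(S_P)) \distequal \err_Q(\cA(S_Q))$, so their expectations agree. Hence $\cN^*(\cA, \cP_\cX \diamond (\cH \circ \cG_\cX), \eps) \le \cN^*(\cA, \cP_\cX \diamond \cH, \eps)$ for every $\cA \in \mathbb{A}_{\cG_\cX}$; taking infima and then replacing $\mathbb{A}_{\cG_\cX}$ by its superset $\mathbb{A}$ on the right of the inequality only weakens it, giving the desired conclusion.

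For the equality when $\cG_\cX$ is compact, I would symmetrize. Given any $\cA \in \mathbb{A}$, define
\[
\tilde{\cA}(\nxy)(\vx) := \cA(\{(g(\vx_i), y_i)\}_{i=1}^n)(g(\vx)),
\]
where $g$ is sampled from the normalized Haar measure $\mu$ on $\cG_\cX$. Left-invariance of $\mu$ makes $\tilde{\cA}$ equivariant: pre-composing the data with $g'$ amounts to the reparametrization $g \mapsto g g'$ under the integral, which preserves the law of $g$. For any $P = P_\cX' \diamond h \in \cP_\cX \diamond \cH$ and any fixed $g$, the transformed sample $\{(g(X_i), h(X_i))\}_{i=1}^n$ is distributed as $n$ i.i.d.\ draws from $Q_g := (P_\cX' \circ g^{-1}) \diamond (h \circ g^{-1}) \in \cP_\cX \diamond (\cH \circ \cG_\cX)$. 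A Fubini calculation gives $\E_{S \sim P^n}[\err_P(\tilde{\cA}(S))] = \E_{g \sim \mu}[\E_{S' \sim Q_g^n}[\err_{Q_g}(\cA(S'))]] \le \eps$ whenever $\cA$ achieves expected error $\eps$ on $\cP_\cX \diamond (\cH \circ \cG_\cX)$ with $n$ samples, which yields the matching direction.

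The main technical difficulty is pure bookkeeping: one must track the four related distributions $P_\cX' \diamond (h \circ g)$, $(P_\cX' \circ g^{-1}) \diamond h$, $P_\cX' \diamond h$, and $(P_\cX' \circ g^{-1}) \diamond (h \circ g^{-1})$, and carefully apply the identities $(X, Y) \sim P \Leftrightarrow (g^{-1}(X), Y) \sim P \circ g$ from the preliminaries so that equivariance can actually be invoked in the correct direction. The only genuinely analytic ingredient is the existence of a finite left-invariant Haar measure, which is why compactness of $\cG_\cX$ is required to upgrade $\ge$ to an equality; for non-compact groups such as $\cG\cL(d)$ the symmetrization construction does not produce a bona fide probability measure and the reverse inclusion is not expected to hold.
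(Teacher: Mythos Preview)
Your proof is correct and follows essentially the same route as the paper: you establish that for any $\cG_\cX$-equivariant $\cA$ the sample complexities on $\cP_\cX\diamond\cH$ and on $\cP_\cX\diamond(\cH\circ\cG_\cX)$ coincide via the change of variables $Z=g(X)$ (the paper packages this as $\cN^*(\cA,\cP,\eps)=\cN^*(\cA,\cP\circ\cG_\cX,\eps)$ together with the set identity $(\cP_\cX\diamond\cH)\circ\cG_\cX=\cP_\cX\diamond(\cH\circ\cG_\cX)$), then take infima and enlarge $\mathbb{A}_{\cG_\cX}$ to $\mathbb{A}$ on the enlarged-class side. Your Haar-symmetrization construction $\tilde{\cA}$ for the compact case is exactly the paper's $\cA'(\nxy)=\cA(\ngxy)\circ g$ with $g\sim\mu$, and the Fubini estimate matches the paper's inequality chain.
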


\begin{rem}
The sample complexity in standard PAC learning is usually defined again hypothesis class $\cH$ only, i.e., $\cP_\cX$ is the set of all the possible input distributions. In that case, $\cP_\cX$ is always invariant under group $\cG_\cX$, and thus Theorem~\ref{thm:augment_function} says that $\cG_\cX$-equivariant learning against hypothesis class $\cH$  is as hard as learning against hypothesis $\cH\circ \cG_\cX$ without equivariance constraint.  
\end{rem}

\subsection{$\Omega(d^2)$ lower bound for orthogonal equivariance with a fixed distribution}

In this subsection we show $\Omega(d^2)$ vs $O(1)$ separation on a single task in our main theorem (\Cref{thm:1vsdsq_fixed}). With the same proof technique, we further show we can get correct dependency on $\eps$ for the lower bound, i.e., $\Omega(\frac{d^2}{\eps})$, by considering a slightly larger function class, which can be learnt by ConvNets with $O(d)$ samples. We also generalize this $\Omega(d^2)$ vs $O(d)$ separation to the case of $\ell_2$ regression with a different proof technique.

\begin{restatable}[]{thm}{1vsdsqfixed}
\label{thm:1vsdsq_fixed}
	There's a single task, $P_X \diamond h^*$, where   $ h^* =\sign{\sum_{i=1}^d  x_i^2 - \sum_{i=d+1}^{2d} x_i^2}$ and $ P_X =N(0,I_{2d})$ and  a constant $\eps_0>0$, independent of $d$, such that for any orthogonal equivariant algorithm $\cA$, we have 
	\begin{equation}
		\cN^*(\cA, P_X \diamond h^*, \eps_0 ) = \Omega(d^2),
	\end{equation}
	
while there's a 2-layer ConvNet, such that $\cN(\erm_{\cnn},P_X \diamond h^*,\eps,\delta) = O\left(\frac{1}{\eps}\left( \log\frac{1}{\eps}+ \log\frac{1}{\delta}\right)\right)$. Moreover,  $\erm_{\cnn}$ could be realized by gradient descent (on the second layer only).

\end{restatable}

\begin{proof}[Proof of \Cref{thm:1vsdsq_fixed}]

\textbf{Upper bound:} implied by upper bound in  \Cref{thm:single_function}. \textbf{Lower bound:} Note that the $P_\cX = N(0,I_{2d})$ is invariant under  $\cO(2d)$, by \Cref{thm:augment_function}, it suffices to show that there's a constant $\eps_0>0$ (independent of $d$), for any algorithm $\cA$, it takes $\Omega(d^2)$ samples to learn the augmented function class $h^*\circ \cO(2d)$ w.r.t. $P_X= N(0,I_{2d})$. 
 Define $h_U = \sign{\vx_{1:d}^\top U\ \vx_{d+1:2d}}$, $\forall U\in \reals^{d\times d} $, and by \Cref{lem:subclass}, we have $\cH = \{h_U\mid U\in \cO(d)\}\subseteq h^*\circ \cO(2d)$. Thus it suffices to a $\Omega(d^2)$ sample complexity lower bound for the sub function class $\cH$,
   i.e.,
 	\begin{equation}
		\cN^*(\cA, N(0,I_{2d})\diamond \{\sign{\vx_{1:d}^\top U\ \vx_{d+1:2d}}\}, \eps_0 ) = \Omega(d^2).
	\end{equation}
By Benedek\&Itai's lower bound, \citep{benedek1991learnability} (\Cref{eq:benedek_bound}), we know
\begin{equation}
	\cN(\cA, \cP,  \eps_0,\delta ) \ge \log_2\left((1-\delta)D(\cH, \rho_\cX, 2\eps_0 )\right).
\end{equation}
By \Cref{lem:packing_orthogonal_group}, there's some constant $C$, such that $D(\cH, \rho_\cX, \eps)\ge (\frac{C}{\eps})^{\frac{d(d-1)}{2}},\ \forall \eps>0$.

The high-level idea for  \Cref{lem:packing_orthogonal_group} is to first show that $\rho_\cX(h_U,h_V)\ge\Omega(\frac{\norm{U-V}_F}{\sqrt{d}})$, and then we show the packing number of orthogonal matrices in a small neighborhood of $I_d$ w.r.t. $\frac{\norm{\cdot}_F}{\sqrt{d}}$ is roughly the same as that in the tangent space of orthogonal manifold at $I_d$, i.e., the set of skew matrices,  which is of dimension $\frac{d(d-1)}{2}$ and has packing number $(\frac{C}{\eps})^{\frac{d(d-1)}{2}}$. The advantage of working in the tangent space is that we can apply the standard volume argument.

Setting $\delta = \frac{1}{2}$,  we have $\cN^*(\cA, \cP, \eps_0 ) \ge 	\cN(\cA, \cP, \frac{1}{2}, 2\eps_0 ) \ge \frac{d(d-1)}{2}\log_2\frac{C}{4\eps_0} -1	= \Omega(d^2)$.
\end{proof}

Indeed, we can improve the above lower bound by applying \Cref{eq:long_bound}, and get
\begin{equation}\label{eq:not_optimal_eps}
	\cN(\cA, \cP, \eps,\frac{1}{2}) \ge \frac{d^2}{e}\left(\frac{1}{2} \right)^{\frac{1}{d^2}} \left(\frac{C}{\eps}\right)^{\frac{1}{2}-\frac{1}{2d}} = \Omega(d^2 {\eps}^{-\frac{1}{2}+\frac{1}{2d}}).
\end{equation}

Note that the dependency in $\eps$ in \Cref{eq:not_optimal_eps} is $\eps^{-\frac{1}{2}+\frac{1}{2d}}$ is not optimal, as opposed to $\eps^{-1}$ in upper bounds and other lower bounds. A possible reason for this might be that \Cref{thm:benedek} (Long's improved version) is still not tight and it might require a tighter probabilistic upper bound for the growth number $\Pi_\cH(n)$, at least taking $P_\cX$ into consideration, as opposed to the current upper bound using VC dimension only. We left it as an open problem to show a single task $P$ with $\Omega(\frac{d^2}{\eps})$ sample complexity for all orthogonal equivariant algorithms. 

However, if the hypothesis is of VC dimension $O(d)$, using a similar idea, we can prove a $\Omega(d^2/\eps)$ sample complexity lower bound for equivariant algorithms, and $O(d)$ upper bounds for ConvNets. 

\begin{restatable}[Single distribution, multiple functions]{thm}{singledistmultifunction}
\label{thm:single_dist_multi_function}
	There is a problem with single input distribution, $\cP = \{P_\cX\}\diamond \cH = \{N(0,I_d)\}\diamond \{ \sign{\sum_{i=1}^d \alpha_i x_i^2}\mid \alpha_i\in\reals\}$, such that for any orthogonal equivariant algorithms $\cA$ and  $\eps >0$, $\cN^*(\cA, \cP, \eps ) = \Omega(d^2/\eps)$, while there's a 2-layer ConvNets architecture, such that $\cN(\erm_{\cnn},\cP,\eps,\delta) = O(\frac{d\log\frac{1}{\eps}+\log\frac{1}{\delta}}{\eps})$.
\end{restatable}

Interestingly, we can show an analog of \Cref{thm:single_dist_multi_function} for $\ell_2$ regression, i.e., the algorithm not only observes the signs but also the values of labels $y_i$.  Here we define the $\ell_2$ loss of function $h:\reals^d\to \reals$ as $\loss_P(h) = \Exp{(\vx,y) \sim P}{(h(\vx)-y)^2}$ and the sample complexity $\cN^*(\cA,\cP,\eps)$ for $\ell_2$ loss similarly as the smallest number $n\in\mathbb{N}$ such that $\forall P\in\cP$,  $\Exp{(\vx_i,y_i)\sim P}{\loss_{P}(\cA(\nxy))}\le \eps \Exp{(x,y)\sim P}{y^2}$. The last term $\Exp{(x,y)\sim P}{y^2}$ is added for normalization to avoid the scaling issue and thus any $\eps>1$ could be achieved trivially by predicting $0$ for all data. 

\begin{restatable}[Single distribution, multiple functions, $\ell_2$ regression]{thm}{singledistmultifunctionelltwo}
\label{thm:regression_single_dist_multi_function}
	There is a problem with single input distribution, $\cP = \{P_\cX\}\diamond \cH = \{N(0,I_d)\}\diamond \{ \sum_{i=1}^d \alpha_i x_i^2\mid \alpha_i\in\reals\}$ , such that for any orthogonal equivariant algorithms $\cA$ and $\eps>0$, $\cN^*(\cA, \cP, \eps ) \ge \frac{d(d+3)}{2}(1-\eps) -1$, while there's a 2-layer ConvNet architecture, such that $\cN^*(\erm_\cnn, \cP, \eps ) \le  d$ for any $\eps>0$.
\end{restatable}

\vspace{-0.1cm}
\subsection{$\Omega(d)$ lower bound for permutation equivariance}

In this subsection we will present $\Omega(d)$ lower bound for permutation equivariance via a different proof technique --- direct coupling. The high-level idea of direct coupling is to show with constant probability over $(\vX_n,\vx)$, we can find a $g\in\cG_\cX$, such that $g(\vX_n)=\vX_n$, but $\vx$ and $g(\vx)$ has different labels, in which case no equivariant algorithm could make the correct prediction. 

\begin{restatable}[]{thm}{permutation}
\label{thm:permutation}
Let  $\vt_i =\ve_i+\ve_{i+1}$ and $\vs_i = \ve_i+\ve_{i+2}$\footnote{For vector $\vx\in\reals^d$, we define $x_i = x_{(i-1)\textrm{ mod}~d +1}$. } and $P$ be the uniform distribution on $\{(\vs_i,1)\}_{i=1}^n\cup\{(\vt_i,-1)\}_{i=1}^n$, which is the classification problem for local textures in a 1-dimensional image with $d$ pixels. Then for any permutation equivariant algorithm $\cA$, $\cN(\cA,\cP,\frac{1}{8},\frac{1}{8})\ge \cN^*(\cA,\cP,\frac{1}{4})\ge \frac{d}{10}$. Meanwhile,  $\cN(\erm_{CNN},\cP,0,\delta) \le \log_2{\frac{1}{\delta}}+2$, where   $\erm_{CNN}$ stands for $\erm_{CNN}$ for function class of 2-layer ConvNets.
\end{restatable}

\begin{rem}
 The task could be understood as detecting if there are two consecutive white pixels in the black background. For proof simplicity, we take texture of length 2 as an illustrative example.
It is straightforward to extend the same proof to more sophisticated local pattern detection problem of any constant length and to 2-dimensional images.
\end{rem}

\section{Conclusion}
 We rigorously justify the common intuition that ConvNets can have better inductive bias than FC nets, by constructing a single natural distribution on which any FC net requires $\Omega(d^2)$ samples to generalize if trained with most gradient-based methods starting with gaussian initialization. On the same task, $O(1)$ samples suffice for convolutional architectures. We further extend our results to permutation equivariant algorithms,  including adaptive training algorithms like Adam and AdaGrad, $\ell_1$ regularization, etc. The separation becomes $\Omega(d)$ vs $O(1)$ in this case.  

\section*{Acknowledgement} 
We acknowledge support from NSF, ONR, Simons Foundation, Schmidt Foundation, Mozilla Research, Amazon Research, DARPA and SRC. ZL is also supported by Microsoft Research PhD Fellowship.

\bibliography{main}

\onecolumn
\newpage
\appendix

\section{Some basic inequalities}

\begin{lemma}\label{lem:arccos}
\[\forall x\in[-1,1],\quad \frac{\arccos x}{\sqrt{1-x}}\ge \sqrt{2}.\]	
\end{lemma}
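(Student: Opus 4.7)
The plan is a direct trigonometric substitution that reduces the claim to the elementary inequality $\sin u \le u$ for $u \ge 0$. First I would substitute $x = \cos\theta$ with $\theta \in [0, \pi]$, so that $\arccos x = \theta$. The ratio becomes a function of $\theta$ alone, and since the claim is an inequality (not an equality), it suffices to show $\arccos x \ge \sqrt{2(1-x)}$, i.e.\ $\theta \ge \sqrt{2(1-\cos\theta)}$ for $\theta \in [0,\pi]$.

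Next I would rewrite the right-hand side using the half-angle identity $1-\cos\theta = 2\sin^2(\theta/2)$. Since $\theta/2 \in [0,\pi/2]$, we have $\sin(\theta/2) \ge 0$, so $\sqrt{2(1-\cos\theta)} = 2\sin(\theta/2)$. The inequality to prove is therefore
\[
\theta \ge 2\sin(\theta/2), \qquad \theta \in [0,\pi],
\]
which is equivalent, after the change of variable $u = \theta/2 \in [0, \pi/2]$, to the standard inequality $u \ge \sin u$. This last inequality holds for all $u \ge 0$ (for example, because $u - \sin u$ vanishes at $0$ and has nonnegative derivative $1 - \cos u \ge 0$). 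Combined with the observation that the inequality is trivial at $x=1$ (both sides vanish and the ratio extends by continuity to $\sqrt{2}$, as one can verify by expanding $\arccos(1-t) \sim \sqrt{2t}$ as $t \to 0^+$), this completes the proof. There is no real obstacle here; the only subtlety worth flagging is ensuring the square root is taken with the correct sign on $[0,\pi/2]$, which is handled automatically by the nonnegativity of $\sin(\theta/2)$ on this interval.
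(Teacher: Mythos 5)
Your proposal is correct and follows essentially the same route as the paper: substitute $x=\cos\theta$, apply the half-angle identity $1-\cos\theta = 2\sin^2(\theta/2)$, and reduce to the elementary inequality $\sin u \le u$. Your handling of the boundary case $x=1$ and the sign of $\sin(\theta/2)$ is slightly more careful than the paper's one-line computation, but the argument is the same.
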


\begin{proof}
Let $x=\cos(t)$, $t\in[-\pi,\pi]$, we have 
\begin{equation*}
\frac{\arccos(x)}{\sqrt{1-x}} = \frac{t}{\sqrt{1-\cos(t)}} = \frac{t}{\sqrt{2}\sin(t/2)}\ge \sqrt{2}.	
\end{equation*}
	
\end{proof}

\begin{lemma}\label{lem:lower_bound_distance}
$\exists C>0$, $\forall d\in \mathbb{N}^+, M\in \reals^{d\times d}$, 	

\begin{equation}
C\norm{M}_F /\sqrt{d}\le  \Exp{\vx\sim S_{d-1}}{\norm{M\vx}_2}	\le \norm{M}_F/\sqrt{d}.
\end{equation}

\end{lemma}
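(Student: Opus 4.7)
My plan is to attack the upper bound via Jensen and the lower bound via a Paley--Zygmund second moment argument, both reduced to moment computations for quadratic forms on the sphere.

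\textbf{Upper bound.} First I would apply Jensen's inequality,
\[
\Exp{\vx\sim S_{d-1}}{\norm{M\vx}_2} \le \sqrt{\Exp{\vx\sim S_{d-1}}{\norm{M\vx}_2^2}},
\]
and then compute the second moment using that the uniform distribution on $S_{d-1}$ has covariance $\Exp{}{\vx\vx^\top}=I_d/d$. This gives $\Exp{}{\norm{M\vx}_2^2} = \tr(M^\top M \cdot I_d/d) = \norm{M}_F^2/d$, so the upper bound follows immediately.

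\textbf{Lower bound via Paley--Zygmund.} The key input is a fourth moment computation. For any symmetric $A\in\reals^{d\times d}$ and $\vx$ uniform on $S_{d-1}$, a standard identity gives
\[
\Exp{}{(\vx^\top A\vx)^2} \;=\; \frac{2\tr(A^2)+(\tr A)^2}{d(d+2)}.
\]
Applying this with $A=M^\top M$ and using $\tr((M^\top M)^2)=\sum_i\sigma_i^4 \le \bigl(\sum_i\sigma_i^2\bigr)^2 = \norm{M}_F^4$, I get
\[
\Exp{}{\norm{M\vx}_2^4} \;\le\; \frac{3\norm{M}_F^4}{d(d+2)} \;\le\; 3\bigl(\Exp{}{\norm{M\vx}_2^2}\bigr)^2.
\]
Now Paley--Zygmund with threshold $\theta=1/2$ gives
\[
\prob{}{\norm{M\vx}_2^2 \ge \tfrac{1}{2}\Exp{}{\norm{M\vx}_2^2}} \;\ge\; \bigl(\tfrac{1}{2}\bigr)^2\frac{(\Exp{}{\norm{M\vx}_2^2})^2}{\Exp{}{\norm{M\vx}_2^4}} \;\ge\; \frac{1}{12}.
\]
On this event $\norm{M\vx}_2 \ge \norm{M}_F/\sqrt{2d}$, so $\Exp{}{\norm{M\vx}_2} \ge \frac{1}{12\sqrt{2}}\cdot \norm{M}_F/\sqrt{d}$, yielding the claim with $C=1/(12\sqrt{2})$.

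\textbf{Main obstacle and shortcuts.} The only nontrivial step is the fourth moment identity for uniform $\vx\in S_{d-1}$, which I would either cite or derive from the moment formulas $\Exp{}{x_i^4}=3/(d(d+2))$ and $\Exp{}{x_i^2x_j^2}=1/(d(d+2))$ for $i\ne j$ (which follow e.g.\ from $\vx\distequal \vg/\norm{\vg}_2$ with $\vg\sim N(0,I_d)$, by the independence of $\vg/\norm{\vg}_2$ and $\norm{\vg}_2$ and known chi-squared moments). Once this is in hand, the rest is essentially a one-line Paley--Zygmund application, and the constant $C$ is dimension-free as required.
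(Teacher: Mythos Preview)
Your proof is correct and takes a genuinely different route from the paper for the lower bound (the upper bound is identical). The paper reduces to diagonal $M$ via SVD, invokes Proposition~2.5.1 of \cite{talagrand2014upper} to obtain $C\norm{\Sigma}_F\le \Exp{\vx\sim N(0,I_d)}{\norm{\Sigma\vx}_2}$ as a black box, and then transfers from Gaussian to the sphere using the factorization $\vg=\norm{\vg}_2\cdot\hat\vg$ together with $\Exp{}{\norm{\vg}_2}\le\sqrt{d}$. Your Paley--Zygmund argument works directly on the sphere, needs only the standard fourth-moment identity $\Exp{}{(\vx^\top A\vx)^2}=(2\tr(A^2)+(\tr A)^2)/(d(d+2))$, and produces an explicit constant $C=1/(12\sqrt{2})$ without any external citation. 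The trade-off: the paper's proof is shorter if one is willing to cite Talagrand, whereas yours is fully self-contained and arguably more elementary, at the cost of carrying out the moment computation.
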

%
%

\begin{proof}[Proof of Lemma~\ref{lem:lower_bound_distance}]
\ 

	\textbf{Upper Bound:} By Cauchy-Schwarz inequality, we have
	\[ \Exp{\vx\sim S_{d-1}}{\norm{M\vx}_2} \le \sqrt{\Exp{\vx\sim S_{d-1}}{\norm{M\vx}^2_2}} = \sqrt{\tr\left[ M\Exp{\vx\sim S_{d-1}}{\vx\vx^\top}M^\top\right]} = \sqrt{\frac{\tr[MM^\top]}{d}} = \frac{\norm{M}_F}{\sqrt{d}}.
	\]
	
	\textbf{Lower Bound:} Let $M = U\Sigma V^\top$ be the singular value decomposition of $M$, where $U,V$ are orthogonal matrices and $\Sigma$ is diagonal. Since $\norm{M}_F = \norm{\Sigma}_F$, and $\Exp{\vx\sim S_{d-1}}{\norm{M\vx}_2} = \Exp{\vx\sim S_{d-1}}{\norm{\Sigma\vx}_2}$, w.l.o.g., we only need to prove the lower bound for all diagonal matrices.
	
	By Proposition 2.5.1 in \citep{talagrand2014upper}, there's some constant $C$, such that 

\[C\norm{\Sigma}_F = C\sqrt{\sum_{i=1}^d \sigma_i^2} \le \E_{\vx\sim N(0,I_d)}{\sqrt{ \sum_{i=1}^d x_i^2 \sigma_i^2}} =\Exp{\vx\sim N(0,I_d)}{\norm{M\vx}}_2. \]

By Cauchy-Schwarz Inequality, we have $ \Exp{\vx\sim N(0,I_d)}{\norm{\vx}_2}\le \sqrt{\Exp{\vx\sim N(0,I_d)}{\norm{\vx}^2_2}} =\sqrt{d}$.
Therefore, we have 
\begin{equation}
	\begin{split}
	&C\norm{\Sigma}_F \\
	\le & \Exp{\vx\sim N(0,I_d)}{\norm{M\vx}}_2 \\
	= & \Exp{\hat{\vx}\sim S_{d-1}}{\norm{M\hat{\vx}}}_2 \Exp{\vx\sim N(0,I_d)}{\norm{\vx}_2}\\
	 \le & \Exp{\hat{\vx}\sim S_{d-1}}{\norm{M\hat{\vx}}}_2 \sqrt{d},
	\end{split}
\end{equation}

which completes the proof.
\end{proof}

\begin{lem}\label{lem:anti-concentration}
For any $z>0$, we have 
\begin{align*}
    \Pr_{x\sim N(0,\sigma)}\left(|x|\leq z\right) \leq \frac{2}{\sqrt{\pi}}\frac{z}{\sigma}
\end{align*}
\end{lem}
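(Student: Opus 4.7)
The proof plan is a direct pointwise bound on the Gaussian density. The first step is to express the probability as a one-dimensional integral against the density,
\[
\Pr_{x\sim N(0,\sigma)}(|x|\le z) \;=\; \int_{-z}^{z} \frac{1}{\sigma\sqrt{2\pi}}\, e^{-x^2/(2\sigma^2)}\,dx.
\]
The second step is to upper-bound the exponential factor: since $-x^2/(2\sigma^2)\le 0$ for all $x$, we have $e^{-x^2/(2\sigma^2)}\le 1$ pointwise, so the integrand is at most the peak density $\tfrac{1}{\sigma\sqrt{2\pi}}$. Integrating this constant over an interval of length $2z$ yields $\tfrac{2z}{\sigma\sqrt{2\pi}} = \tfrac{\sqrt{2}}{\sqrt{\pi}}\cdot\tfrac{z}{\sigma}$. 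The third and final step is the elementary comparison $\sqrt{2}\le 2$, which upgrades this to the stated (looser) bound $\tfrac{2}{\sqrt{\pi}}\cdot\tfrac{z}{\sigma}$.

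There is essentially no technical obstacle here; the entire argument is a one-line consequence of bounding a unimodal density by its maximum value on the interval of integration. The only thing worth flagging is the convention for $\sigma$: the statement is consistent with $\sigma$ denoting the standard deviation (so that the density at $0$ is $1/(\sigma\sqrt{2\pi})$), and under this convention the bound is in fact slightly loose by a factor of $\sqrt{2}$. No use of the $\Phi$ function, Gaussian tail estimates, or integration by parts is required.
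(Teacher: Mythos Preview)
Your proposal is correct and matches the paper's proof essentially line for line: write the probability as the integral of the Gaussian density over $[-z,z]$, bound $e^{-x^2/(2\sigma^2)}\le 1$, and integrate the constant to obtain $\sqrt{2/\pi}\cdot z/\sigma$. The paper actually stops at this sharper constant (which already implies the stated $2/\sqrt{\pi}$ bound), and your extra step $\sqrt{2}\le 2$ is the same observation.
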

\begin{proof}
\begin{align*}
 \Pr_{x\sim N(0,\sigma)}\left(|x|\leq z\right) &= \int_{-z}^z \frac{1}{\sqrt{2\pi}~\sigma}\exp\left(-\frac{x^2}{2\sigma^2}\right)\mathrm{d}x
 \leq \sqrt{\frac{2}{\pi}}\frac{z}{\sigma}
\end{align*}
\end{proof}

\section{Upperand lower bound for sample complexity with VC dimension}

\begin{theorem}\label{thm:VC}[\cite{blumer1989learnability}]
	 If learning algorithm $\cA$ is consistent and ranged in $\cH$, i.e. $\cA(\nxy)(\vx_i)=y_i$, $\forall i \in[n]$ and $\cA(\nxy)\in \cH$, then for any distribution $P_\cX$ and $0<\eps,\delta<1$, we have
	\begin{equation}
		\cN(\cA, P_\cX\diamond \cH, \eps,\delta) = O(\frac{\vcd(\cH)\ln\frac{1}{\eps} + \ln\frac{1}{\delta}}{\eps}).
	\end{equation}
	
	Meanwhile, there's a distribution $P_\cX$ supported on any subsets $\{x_0,\ldots,x_{d-1}\}$ which can be shattered by $\cH$, such that for any $0<\eps,\delta<1$ and any algorithm $\cA$, it holds
	\begin{equation}
		\cN(\cA, P_\cX\diamond \cH, \eps,\delta) = \Omega(\frac{\vcd(\cH) + \ln\frac{1}{\delta}}{\eps}).
	\end{equation}
\end{theorem}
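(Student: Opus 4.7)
Let $d=\vcd(\cH)$. Since $\cA$ is consistent and ranges in $\cH$, the bad event $\{\err_P(\cA(S))>\eps\}$ is contained in the event $B$ that \emph{some} $h\in\cH$ has zero empirical error on $S$ yet true error above $\eps$. The plan is to bound $\Pr[B]$ by the classical double-sample trick. Introduce an independent ghost sample $S'$ of size $n$; the symmetrization lemma gives
\begin{equation*}
\Pr[B]\le 2\Pr\bigl[\exists h\in\cH:\err_S(h)=0,\ \err_{S'}(h)>\eps/2\bigr].
\end{equation*}
Condition on the multiset $T=S\cup S'$. The hypotheses in $\cH$ induce at most $\Pi_\cH(2n)\le (2en/d)^d$ distinct dichotomies on $T$ by Sauer-Shelah. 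For each fixed such dichotomy with at least $\eps n/2$ true errors on $T$, a uniformly random $(S,S')$-split places every error in $S'$ with probability at most $2^{-\eps n/2}$. A union bound then yields $\Pr[B]\le 2(2en/d)^d\, 2^{-\eps n/2}$; enforcing this to be $\le\delta$ and solving for $n$ gives the claimed $O\bigl((d\ln(1/\eps)+\ln(1/\delta))/\eps\bigr)$ bound.

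\textbf{Lower bound sketch.} Let $X=\{x_0,\ldots,x_{d-1}\}$ be shattered by $\cH$. The plan is to build the hard distribution with one heavy ``free'' point and many light ``informative'' points: put mass $1-8\eps$ on $x_0$ and $\tfrac{8\eps}{d-1}$ on each remaining $x_i$. Since the sample complexity is a supremum over $P\in\cP$, it suffices to produce a single distribution for which \emph{some} target in $\cH$ foils $\cA$. To that end, pick the target $h^*$ uniformly from the $2^d$ dichotomies of $X$ (all realized in $\cH$ by shatteredness). For $n\le c\,d/\eps$ with a small enough constant $c$, a Chernoff bound on the number of distinct $x_i$ observed shows that with constant probability $\Omega(d)$ of the light points remain unseen. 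On each unseen light point, the posterior on $h^*(x_i)$ is uniform over $\pm 1$ and independent of the algorithm's inputs, so the algorithm incurs error $\tfrac12$ there in expectation; summing the weights $\tfrac{8\eps}{d-1}$ over $\Omega(d)$ unseen points gives expected error $\Omega(\eps)$ over the random target, hence some $h^*\in\cH$ must fail with probability $>\delta$ at constant $\delta$. The additive $\ln(1/\delta)/\eps$ piece I would obtain separately from a two-point argument: on a single pair $\{x_0,x_1\}$ with $P_\cX(x_1)=\eps$, distinguishing the two labelings of $x_1$ requires seeing $x_1$, and the probability of never seeing it in $n$ draws is $(1-\eps)^n\ge\delta$ unless $n=\Omega(\ln(1/\delta)/\eps)$. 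Taking the maximum of the two lower bounds yields the stated $\Omega\bigl((d+\ln(1/\delta))/\eps\bigr)$.

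\textbf{Main obstacle.} The genuinely delicate point is extracting \emph{both} additive terms $\vcd(\cH)/\eps$ and $\ln(1/\delta)/\eps$ from a \emph{single} distribution rather than from two tuned constructions. The two contributions come from distinct failure modes (many unseen rare points vs.\ one unlucky informative coordinate), and combining them cleanly requires a finer balance of the spike mass $x_0$ together with a tail bound on the number of unseen light points that survives past constant $\delta$. Since the theorem is cited verbatim from \cite{blumer1989learnability}, my plan is to take the maximum-of-two-arguments route above for the $\Omega$ statement and defer to that paper for the tightened constants in the combined form.
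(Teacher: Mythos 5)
The paper gives no proof of this theorem at all---it is quoted directly from \cite{blumer1989learnability}---and your sketch reproduces the standard arguments from that literature (symmetrization plus Sauer--Shelah for the upper bound, the spiked hard distribution with a heavy point and $d-1$ light points for the lower bound), so it is essentially the same approach as the cited source. The gap you flag about extracting both additive terms from a single distribution is real but minor: the same spiked distribution also yields the $\ln(1/\delta)/\eps$ term by considering the event that the sample contains only the heavy point, which has probability $(1-8\eps)^n\ge\delta$ unless $n=\Omega(\ln(1/\delta)/\eps)$, exactly as handled in the cited reference.
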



\section{Equivariance in Algorithms}\label{sec:equiv}

In this section, we give sufficient conditions for an iterative algorithm to be equivariant (as defined in \Cref{alg:iter}).  


\begin{thm}\label{thm:main_equivariance}
	Suppose $\cG_\cX$ is a group acting on  $\cX=\reals^d$, the iterative algorithm $\cA$  is $\cG_\cX$-equivariant (as defined in \Cref{alg:iter}) if the following conditions are met: (proof in appendix)
	\begin{enumerate}
		\item There's a group $\cG_\cW$ acting on $\cW$ and a group isomorphism $\tau:\cG_\cX\to \cG_\cW$, such that $\cM[\tau(g)(\vW)](g(\vx)) = \cM[\vW](\vx),\ \forall \vx\in\cX,\vW\in\cW,g\in\cG$. {(One can think $g$ as the rotation $U$ applied on data $\vx$ in linear regression and $\tau(U)$ as the rotation $U$ applied on $\vw$.)}
		\item Update rule $F$ is invariant under any joint group action $(g,\tau(g))$, $\forall g\in \cG$. In other words, $[\tau(g)](F(\vW,\cM,\nxy)) = F([\tau(g)](\vW),\cM,\ngxy)$.
		\item The initialization $P_{init}$ is invariant under group $\cG_\cW$, i.e. $\forall g\in \cG_\cW$, $P_{init} = P_{init}\circ g^{-1}$.
	\end{enumerate}  
	
Here we want to address that the three conditions in Theorem~\ref{thm:main_equivariance} are natural and almost necessary. Condition 1 is the minimal expressiveness requirement for model $\cM$ to allow equivariance. 
Condition 3 is required for equivariance at initialization. Condition 2 is necessary for induction.

\end{thm}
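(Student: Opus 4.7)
The plan is to establish equivariance via a coupling argument combined with induction on the iteration index. Fix an arbitrary $g \in \cG_\cX$ and a dataset $\nxy$; the goal is to show $\cA(\ngxy) \circ g \distequal \cA(\nxy)$ as random hypotheses.

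First, I would couple the two runs of Algorithm~\ref{alg:iter} as follows: sample $\vW^{(0)} \sim P_{init}$ for the original run on data $\nxy$, and set the initial weight of the rotated run (on data $\ngxy$) to be $\vW'^{(0)} := \tau(g)(\vW^{(0)})$. Since $\tau$ is a group isomorphism, $\tau(g) \in \cG_\cW$, and so Condition 3 gives $\tau(g)(\vW^{(0)}) \distequal \vW^{(0)} \sim P_{init}$. Hence the coupling supplies the correct marginal distribution to the rotated run's initialization, and the two runs can be analyzed jointly on the same probability space.

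Next, I would prove by induction on $t$ that $\vW'^{(t)} = \tau(g)(\vW^{(t)})$ for every $t \in \{0, 1, \ldots, T\}$ under this coupling. The base case $t = 0$ is true by construction. For the inductive step, assuming $\vW'^{(t)} = \tau(g)(\vW^{(t)})$, Condition 2 applied at the $t$-th iteration yields
\[
\vW'^{(t+1)} = F(\vW'^{(t)}, \cM, \ngxy) = F(\tau(g)(\vW^{(t)}), \cM, \ngxy) = \tau(g)\bigl(F(\vW^{(t)}, \cM, \nxy)\bigr) = \tau(g)(\vW^{(t+1)}),
\]
closing the induction. At the final iteration, for any test point $\vx \in \cX$, Condition 1 gives $\cM[\vW'^{(T)}](g(\vx)) = \cM[\tau(g)(\vW^{(T)})](g(\vx)) = \cM[\vW^{(T)}](\vx)$, so taking signs implies $\cA(\ngxy)(g(\vx)) = \cA(\nxy)(\vx)$ pointwise for every $\vx$. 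Under the coupling this is a deterministic equality of full functions, so $\cA(\ngxy) \circ g$ and $\cA(\nxy)$ have identical distributions in function space, which is precisely $\cG_\cX$-equivariance.

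I do not expect any serious obstacle; the argument is essentially careful bookkeeping with the group actions. The only subtle point is ensuring that the coupling preserves the marginal distribution of the rotated run's initialization, which is exactly where Condition 3 is used, applicable because $\tau$ being an isomorphism guarantees $\tau(g) \in \cG_\cW$. If one wishes to accommodate stochastic updates $F$ (e.g.\ random minibatch sampling), the argument extends by additionally coupling the auxiliary randomness of the two runs, provided Condition 2 is interpreted as an equality in distribution rather than a deterministic identity.
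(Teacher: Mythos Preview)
Your proposal is correct and follows essentially the same approach as the paper's proof: couple the two runs by setting $\vtW^{(0)} = \tau(g)(\vW^{(0)})$, invoke Condition~3 for the initialization law, use Condition~2 to run the induction $\vtW^{(t)} = \tau(g)(\vW^{(t)})$, and conclude with Condition~1 at step $T$. Your write-up is in fact slightly more careful (explicitly taking signs and spelling out the inductive step), and your closing remark about stochastic updates matches the paper's own remark following the proof.
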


\begin{proof}[Proof of Theorem~\ref{thm:main_equivariance}]

	$\forall g\in\cG_\cX$, we sample $\vW^{(0)}\sim P_{init}$, and  $\vtW^{(0)} = \tau(g)(\vW^{(0)})$.
	
	 By property (3), $\vtW^{(0)}\distequal \vW^{(0)} \sim P_{init}$. Let $\vW^{(t+1)} = F\left(\vW^{(t)},\cM,\nxy \right)$ and $\vtW^{(t+1)} = F\left(\vtW^{(t)},\cM,\ngxy\right)$ for $0\le t\le T-1$, we can show 
	$\vtW^{(t)} = \tau(g)\vW^{(t)})$ by induction using property (2). By definition of Algorithm~\ref{alg:iter}, we have 
	
	\[\cA\nxy \distequal \cM[\vW^{(T)}],\] 
	and 
	\[\cM[\vtW^{(T)}]\circ g \distequal \cA(\ngxy)\circ g.\]
	
	By property (1), we have $\cM[\vtW^{(T)}](g(\vx)) = \cM[\tau(g)(\vW^{(T)}](g(\vx)) = \cM[\vW^{(T)}](\vx)$. Therefore, $\cA(\nxy) \distequal \cM[\vW^{(T)}] =  \cM[\vtW^{(T)}]\circ g \distequal \cA(\ngxy)\circ g$,  meaning $\cA$ is $\cG_\cX$-equivariant.
\end{proof}

\begin{rem}
	Theorem \ref{thm:main_equivariance} can be extended to the stochastic case and the adaptive case which allows the algorithm to use information of the whole trajectory, i.e., the update rule could be generalized as $\vW^{(t+1)} = F_t(\{\vW^{(s)}\}_{s=1}^t,\cM,\nxy)$, as long as (the distribution of) each $F_t$ is invariant under joint transformations.
\end{rem}
Below are two example applications of Theorem~\ref{thm:main_equivariance}. Other results  in Table~\ref{tb:equivariance} could be achieved in the same way.

\indent For classification tasks, optimization algorithms often work with a differentiable surrogate loss $\loss:\reals \to \reals$ instead the 0-1 loss, such that $\loss(yh(\vx))\ge \indct{yh(\vx)\le 0}$, and the total loss for hypothesis $h$ and training, $\Loss(\cM(\vW);\nxy)$ is defined as $\sum_{i=1}^n \loss(y_i[\cM(\vW)](\vx_i))$. It's also denoted by $\Loss(\vW)$ when there's no confusion.

\begin{defn}[Gradient Descent for FC nets]
We call Algorithm~\ref{alg:iter} \emph{Gradient Descent} if $\cM=\fc$ and $F = \gdl$ ,
where $\gdl(\vW) = W - \eta \nabla \Loss(\vW)$ is called the \emph{one-step Gradient Descent update} and $\eta>0$ is the  learning rate. 
\end{defn}
\begin{algorithm}
    \caption{Gradient Descent for $\fc$ (FC networks)}
    \label{alg:gd}
	\begin{algorithmic}
		\REQUIRE{Initial parameter distribution $P_{init}$ 
		, total iterations $T$, training dataset $\nxy$, loss function $\loss$ }
		\ENSURE{Hypothesis $h:\cX\to \cY$. }
		\STATE{Sample $\vW^{(0)}\sim P_{init}$.}
	    \FOR{ $t = 0$ to $T-1$}
	        \STATE $\vW^{(t+1)} = \vW^{(t)}-\eta \sum\limits_{i=1}^n \nabla \loss(\fc(\vW^{(t)})(\vx_i),y_i)$
	    \ENDFOR
	    \RETURN{$h=\sign{\fc[\vW^{(T)}]}$.}
	\end{algorithmic}
\end{algorithm}
\begin{cor}\label{cor:gd}
	Fully-connected networks trained with (stochastic) gradient descent from i.i.d. Gaussian initialization is equivariant under the orthogonal group.
\end{cor}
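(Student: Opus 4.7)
The plan is to verify the three conditions of \Cref{thm:main_equivariance} with $\cG_\cX = \cO(d)$ acting on $\cX = \reals^d$ by left multiplication. I take $\cG_\cW$ to be an isomorphic copy of $\cO(d)$ that touches only the first layer: for $U\in \cO(d)$ and $\vW = (W_1,\ldots,W_L)$, set $\tau(U)\vW = (W_1 U^\top, W_2, \ldots, W_L)$. Condition~1 of \Cref{thm:main_equivariance} is then immediate, since only $W_1$ meets the input: $\fc[\tau(U)\vW](U\vx) = W_L \sigma(\cdots \sigma(W_1 U^\top U \vx)\cdots) = \fc[\vW](\vx)$ because $U^\top U = I_d$.

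For condition~3, the only layer whose distribution changes under $\tau(U)$ is $W_1$, and i.i.d.\ Gaussian initialization of $W_1$ is invariant under right multiplication by $U^\top$: each row is an isotropic Gaussian vector (so rotation-invariant in distribution), and rows remain independent after multiplication by $U^\top$ on the right. Condition~2 is the main point. Using condition~1, the total loss satisfies $\Loss_{\ngxy}(\tau(U)\vW) = \Loss_{\nxy}(\vW)$ as a scalar identity in $\vW$. A direct chain-rule computation on the first-layer pre-activations $W_1 U^\top \cdot U\vx_i$ then gives $\nabla_{\tilde W_1} \Loss_{\ngxy}(\tau(U)\vW) = \bigl(\nabla_{W_1}\Loss_{\nxy}(\vW)\bigr)\, U^\top$, while for $\ell\ge 2$ the two gradients coincide because the deeper weights see identical pre-activations under the joint transformation. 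Plugging into the update $F(\vW) = \vW - \eta \nabla \Loss(\vW)$ and pulling the $U^\top$ out of the first layer yields $\tau(U)(F(\vW,\cM,\nxy)) = F(\tau(U)\vW,\cM,\ngxy)$, which is exactly condition~2.

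Combining these three facts with \Cref{thm:main_equivariance} proves $\cO(d)$-equivariance for full-batch GD. The extension to SGD follows by running the same per-step argument on each minibatch loss, provided the random batch selection depends on indices only and not on the coordinate system---which is true for any standard sampling scheme---so the per-iteration invariance composes via the stochastic/adaptive extension noted in the remark following \Cref{thm:main_equivariance}. The only real obstacle is bookkeeping the lone $U^\top$ factor that the chain rule attaches to the first-layer gradient; conditions~1 and~3 are essentially automatic consequences of the architecture and of Gaussian rotation invariance.
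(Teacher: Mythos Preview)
Your proof is correct and follows essentially the same route as the paper: define $\tau(U)\vW=(W_1U^\top,W_2,\ldots,W_L)$, verify condition~1 from $U^\top U=I_d$, condition~3 from rotation invariance of i.i.d.\ Gaussians, and condition~2 via the chain rule on the first-layer weights. The only cosmetic difference is that the paper packages your chain-rule computation as a standalone lemma (gradient descent is invariant under orthogonal reparametrization of the full parameter space), which it then specializes to $O=\tau(U)$; this buys reusability for the Newton's-method corollary but is otherwise the same argument you wrote out directly.
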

	
\begin{proof}[Proof of Corollary~\ref{cor:gd}]

We will verify the three conditions required in Theorem~\ref{thm:main_equivariance} one by one.

The only place we use the FC structure is for the first condition.
\begin{lem}\label{lem:fc_rotate}
There's a subgroup $\cG_\cW$ of $\cO(m)$, and a group isomorphism $\tau:\cG_\cX=\cO(d) \to \cG_\cW$, such that $\fc[\tau(R)(\vW)]\circ R = \fc[\vW],\ \forall \vW\in\cW,R\in\cG_\cX$.
\end{lem}
\begin{proof}[Proof of Lemma~\ref{lem:fc_rotate}]
	By  definition, $\fc[\vW](\vx)$ could be written $\fc[\vW_{2:L}](\sigma(W_1\vx))$, which implies $\fc[\vW](\vx) = \fc[W_1R^{-1},\vW_{2:L}](R\vx)$, $\forall R\in\cO(d)$, and thus we can pick $\tau(R) = O\in\cO(m)$, where $O(\vW) = [W_1R^{-1},\vW_{2:L}]$, and $\cG_\cW = \tau(\cO(d))$.
\end{proof}

A notable property of Gradient Descent is that it is invariant under orthogonal re-parametrization. Formally, given loss function $\Loss:\reals^m\to \reals$ and parameters $W\in \reals^m$, an orthogonal re-parametrization of the problem is to replace $(\Loss,W)$ by $(\Loss\circ O^{-1}, OW)$, where $O\in\reals^{m\times m}$ is an orthogonal matrix. 

\begin{lem}[Gradient Descent is invariant under orthogonal re-parametization]\label{lem:orthogonal_repara}
For any $\Loss,W$ and orthogonal matrix $O\in \reals^{m\times m}$, we have
	 $O \gdl(W) = \gd_{\Loss\circ O^{-1}}(OW)$. 
\end{lem}

\begin{proof}[Proof of Lemma~\ref{lem:orthogonal_repara}]
	By definition, it suffices to show that for each $i\in[n]$, and every $\vW$ and $\vW' =O\vW$,
	\[ O\nabla_{\vW} \loss(\fc(\vW)(\vx_i),y_i) = \nabla_{\vW'} \loss(\fc(O^{-1}\vW')(\vx_i),y_i), \]
	
which holds by chain rule.
\end{proof}

For any $R\in\cO(d)$, and set $O = \tau(R)$ by Lemma~\ref{lem:fc_rotate}, $[\Loss\circ O^{-1}](\vW) = \sum_{i=1}^n \loss(y_i\fc[O^{-1}(\vW)](\vx_i)) = \sum_{i=1}^n \loss(y_i\fc[\vW](R\vx_i))$. The second condition in Theorem~\ref{thm:main_equivariance} is satisfied by plugging above equality into Lemma~\ref{lem:orthogonal_repara}.

The third condition is also satisfied since the initialization distribution is i.i.d. Gaussian, which is known to be orthogonal invariant. In fact, from the proof, it suffices to have the initialization of the first layer invariant under $\cG_\cX$.
\end{proof}

\begin{cor}\label{cor:newton}
	FC nets trained with newton's method from zero initialization for the first layer and any initialization for the rest parameters is $\cG\cL(d)$-equivariant, or equivariant under the group of invertible linear transformations. 
	
	Here, Netwon's method means to use $\nt(\vW) = \vW - \eta (\nabla^2 \Loss(\vW))^{-1}\nabla\Loss(\vW)$ as the update rule and we assume $\nabla^2 \Loss(\vW)$ is invertible. Proof is deferred into Appendix, .
\end{cor}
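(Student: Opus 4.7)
The plan is to mimic the proof of Corollary~\ref{cor:gd} by verifying the three conditions of Theorem~\ref{thm:main_equivariance} for $\cG_\cX = \cG\cL(d)$ and the parametric model $\cM = \fc$. The three steps to carry out are (i) producing the right action $\tau$ on parameters, (ii) showing Newton's method is invariant under the induced invertible linear reparametrization of the loss, and (iii) checking that zero initialization for $W_1$ together with an arbitrary distribution on $W_{2:L}$ is invariant under $\tau(\cG\cL(d))$.

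First, for condition (1), I would establish the analog of Lemma~\ref{lem:fc_rotate}: for any $M \in \cG\cL(d)$, define the parameter action $\tau(M)\colon \vW \mapsto (W_1 M^{-1}, W_2, \ldots, W_L)$. A one-line identity $\fc[\vW](\vx) = W_L \sigma(W_{L-1} \cdots \sigma(W_1 M^{-1} (M\vx)))$ shows $\fc[\tau(M)(\vW)](M\vx) = \fc[\vW](\vx)$, and $\tau$ is a group homomorphism into $\cG_\cW \subseteq \cG\cL(m)$.

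Second, for condition (2), I would prove the Newton analog of Lemma~\ref{lem:orthogonal_repara}: if $A \in \cG\cL(m)$ is invertible and $\Loss'(\vW') := \Loss(A^{-1}\vW')$, then by the chain rule $\nabla\Loss'(\vW') = A^{-\top}\nabla\Loss(A^{-1}\vW')$ and $\nabla^2 \Loss'(\vW') = A^{-\top}\nabla^2\Loss(A^{-1}\vW')A^{-1}$, which cancel to give
\[
(\nabla^2\Loss'(\vW'))^{-1}\nabla\Loss'(\vW') = A\,(\nabla^2\Loss(A^{-1}\vW'))^{-1}\nabla\Loss(A^{-1}\vW'),
\]
hence $\nt_{\Loss'}(A\vW) = A\,\nt_{\Loss}(\vW)$. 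Applying this with $A = \tau(M)$ and noting that the training loss transforms as $\Loss(\cdot; \ngxy) = \Loss(\tau(M)^{-1}(\cdot); \nxy)$ (via Step 1), we obtain condition (2) of Theorem~\ref{thm:main_equivariance}.

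Third, for condition (3), observe that $\tau(M)$ fixes $W_2, \ldots, W_L$ and sends $W_1 \mapsto W_1 M^{-1}$. The delta distribution at $W_1 = 0$ is invariant under right multiplication by any invertible matrix, and any distribution on $W_{2:L}$ is trivially invariant since $\tau(M)$ acts as the identity on those coordinates; so the product initialization is $\cG_\cW$-invariant. Combining the three ingredients with Theorem~\ref{thm:main_equivariance} yields the claim.

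The main obstacle I expect is Step 2: the clean affine-invariance of Newton's method is classical, but the proof here must be done inside the formal framework of Theorem~\ref{thm:main_equivariance} (which requires $F$ to be invariant under the joint action on both parameters and data), and requires the technical caveat that $\nabla^2 \Loss(\vW^{(t)})$ remain invertible along the trajectory --- an assumption already built into the statement. A secondary subtlety is that zero initialization of $W_1$ may freeze the gradient through the first layer for some activations $\sigma$; this does not affect equivariance (the conclusion is about symmetry of the output, not about learning), but it is worth flagging.
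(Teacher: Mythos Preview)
Your proposal is correct and follows essentially the same route as the paper: the paper's proof also verifies the three conditions of Theorem~\ref{thm:main_equivariance} by (i) replacing $\cO(d),\cO(m)$ with $\cG\cL(d),\cG\cL(m)$ in Lemma~\ref{lem:fc_rotate}, (ii) invoking the chain rule to show Newton's update is invariant under invertible linear reparametrization, and (iii) noting that zero initialization of the first layer is invariant under any linear transformation. Your write-up in fact supplies more detail than the paper does (the explicit Hessian/gradient transformation in Step~2), and your flagged caveats are sensible but not addressed in the paper either.
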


\begin{proof}[Proof of Corollary~\ref{cor:newton}]
	The proof is almost the same as that of Corollary~\ref{cor:gd}, except the following modifications.

	\indent \textbf{Condition 1:} If we replace the $\cO(d),\cO(m)$ by $\cG\cL(d),\cG\cL(m)$ in the statement and proof Lemma~\ref{lem:fc_rotate}, the lemma still holds.
	
	\indent \textbf{Condition 2:}By chain rule, one can verify the update rule Newton's method is invariant under invertible linear re-parametization, i.e. $O \gdl(W) = \nt_{\Loss\circ O^{-1}}(OW)$, for all invertible matrix $O$.
	
	\indent \textbf{Condition 3:} Since the first layer is initialized to be 0, it is invariant under any linear transformation. 
\end{proof}

\begin{rem}
	The above results can be easily extended to the case of momentum and $L_p$ regularization. For momentum, we only need to ensure that the following update rule, $\vW^{(t+1)} = \gdm(\vW^{(t)},\vW^{(t-1)},\cM,\nxy) = (1+\gamma)\vW^{(t)} -\gamma\vW^{(t-1)}-\eta \nabla\Loss(\vW^{(t)})$, also satisfies the property in Lemma~\ref{lem:orthogonal_repara}. For $L_p$ regularization, because $\norm{\vW}_p$ is independent of $\nxy$, we only need to ensure $\norm{\vW}_p = \norm{\tau(R)(\vW)}_p,\ \forall R\in \cG_\cX$, which is easy to check when $\cG_\cX$ only contains permutation or sign-flip.
\end{rem}

\subsection{Examples of Equivariance for non-iterative algorithms}

To demonstrate the wide application of our lower bounds, we give two more examples of algorithmic equivariance where the algorithm is not iterative. The proofs are folklore.
\begin{defn}
Given a positive semi-definite kernel $K$, the \emph{Kernel Regression} algorithm $\reg_K$ is defined as:

	\begin{align*}
		&\reg_K(\nxy)(\vx) := \indct{K(\vx,\vX_N)\cdot K(\vX_N,\vX_N)^\dag \vy\ge 0 }
	\end{align*}
	where $K(\vX_N,\vX_N)\in \reals^{n\times n}$, $[K(\vX_N,\vX_N)]_{i,j} = K(\vx_i,\vx_j)$, $\vy = [y_1,y_2, \ldots, y_N]^\top$ and $K(\vx,\vX_N) = [K(\vx,\vx_1), \ldots, K(\vx,\vx_N) ]$.
\end{defn}

\textbf{Kernel Regression:} If kernel $K$ is $\cG_\cX$-equivariant, i.e., $\forall g\in\cG_\cX,\vx,\vy\in\cX$, $K(g(\vx),g(\vy))=K(\vx,\vy)$, then algorithm $\reg_K$ is $\cG_\cX$-equivariant.\\
\textbf{ERM:} If $\cF=\cF\circ \cG_\cX$, and $\mathop\argmin_{h\in\cF} \sum_{i=1}^n \indct{h(\vx_i)\neq y_i}$ is unique, then $\erm_\cF$ is $\cG_\cX$-equivariant.

\section{Omitted proofs}



\subsection{Proofs of sample complexity reduction for general equivariance}

Given $\cG_\cX$-equivariant algorithm $\cA$, by definition,   $\cN^*(\cA,\cP,\eps)= \cN^*(\cA,\cP \circ g^{-1},\eps),\forall g\in \cG_\cX$.

Consequently, we have  
\begin{equation}\label{eq:augment_problem}
\cN^*(\cA,\cP,\eps)= \cN^*(\cA,\cP\circ \cG_\cX,\eps).	
\end{equation}

\begin{lem}\label{lem:reduction}
Let $\mathbb{A}$ be the set of all algorithms and $\mathbb{A}_{\cG_\cX}$ be the set of all $\cG_\cX$-equivariant algorithms, the following inequality holds. The equality is attained when $\cG_\cX$ is a compact group.
\begin{equation}\label{eq:inf_augment_problem}
	\inf_{\cA\in \mathbb{A}_{\cG_\cX}}\cN^*(\cA,\cP,\eps) \ge \inf_{\cA\in\mathbb{A}}\cN^*(\cA,\cP\circ \cG_\cX,\eps)
\end{equation}
\end{lem}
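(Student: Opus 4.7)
\textbf{The inequality direction.} The plan is to observe that the inequality follows almost immediately from \Cref{eq:augment_problem}. For any $\cA\in\mathbb{A}_{\cG_\cX}$, equivariance gives $\cN^*(\cA,\cP,\eps)=\cN^*(\cA,\cP\circ\cG_\cX,\eps)$. Since $\mathbb{A}_{\cG_\cX}\subseteq\mathbb{A}$, this quantity is bounded below by $\inf_{\cA'\in\mathbb{A}}\cN^*(\cA',\cP\circ\cG_\cX,\eps)$. Taking the infimum over $\cA\in\mathbb{A}_{\cG_\cX}$ on the left yields \Cref{eq:inf_augment_problem}.

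\textbf{The equality direction (compact $\cG_\cX$).} The key idea is to symmetrize an arbitrary algorithm by averaging over the group using its Haar measure $\mu$ (which exists and is a probability measure precisely because $\cG_\cX$ is compact). Given any $\cA\in\mathbb{A}$, I would define a randomized algorithm $\cA'$ by first sampling $g\sim\mu$ independently of everything else, then setting
\[
\cA'(\nxy) \;:=\; \cA\bigl(\{(g(\vx_i),y_i)\}_{i=1}^n\bigr)\circ g.
\]
Intuitively, $\cA'$ first rotates the data by a uniformly random group element, runs $\cA$ on the rotated data, and ``unrotates'' the resulting hypothesis. The proof then has two parts.

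\textbf{Verifying equivariance of $\cA'$.} For any $h\in\cG_\cX$, I would compute
\[
\cA'(\{(h(\vx_i),y_i)\}_{i=1}^n)\circ h \;\distequal\; \cA(\{(g h(\vx_i),y_i)\}_{i=1}^n)\circ(gh),
\]
where $g\sim\mu$. Substituting $g'=gh$ and invoking the left-invariance of Haar measure shows $g'\sim\mu$, so the right side has the same distribution as $\cA(\{(g'(\vx_i),y_i)\}_{i=1}^n)\circ g' = \cA'(\nxy)$. This is exactly the randomized equivariance condition in the footnote of \Cref{defi:eqvariance}, so $\cA'\in\mathbb{A}_{\cG_\cX}$.

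\textbf{Transferring the sample complexity.} Let $n=\inf_{\cA\in\mathbb{A}}\cN^*(\cA,\cP\circ\cG_\cX,\eps)$ and fix a near-optimal $\cA$ achieving this. For any $P\in\cP$, if $\nxy\sim P^n$ and $(\vx,y)\sim P$, then after applying $g$ the pair $(g(\vx_i),y_i)$ is distributed as $P\circ g^{-1}$ and so is $(g(\vx),y)$. Hence the expected error of $\cA'$ on $P$ equals
\[
\Exp{g\sim\mu}{\err_{P\circ g^{-1}}\bigl(\cA(\{(\vx'_i,y_i)\}_{i=1}^n)\bigr)},
\]
where the inner expectation is over $\{(\vx'_i,y_i)\}\sim(P\circ g^{-1})^n$. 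Since $P\circ g^{-1}\in\cP\circ\cG_\cX$ for every $g$, each inner term is at most $\eps$ by the choice of $\cA$, and averaging preserves the bound. Thus $\cN^*(\cA',\cP,\eps)\le n$, giving the reverse inequality.

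\textbf{Main obstacle.} The one genuinely delicate step is the existence and use of Haar measure: compactness of $\cG_\cX$ is exactly what is needed for $\mu$ to be a probability measure (so ``sampling $g\sim\mu$'' is well-defined for a randomized algorithm) and for Fubini-style interchange of expectations to be legal in the sample-complexity calculation. The rest is bookkeeping with the definitions of $\circ$ and $\diamond$ set up in \Cref{sec:notation}.
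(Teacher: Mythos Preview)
Your proposal is correct and follows essentially the same approach as the paper: the inequality direction via \Cref{eq:augment_problem} and $\mathbb{A}_{\cG_\cX}\subseteq\mathbb{A}$, and the equality direction by symmetrizing an arbitrary algorithm with the Haar measure and checking that the resulting $\cA'$ is equivariant and has no worse sample complexity on $\cP$. Your write-up of the sample-complexity transfer (arguing that each inner term is at most $\eps$ because $P\circ g^{-1}\in\cP\circ\cG_\cX$, then averaging) is in fact a bit cleaner than the paper's chain of inequalities.
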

\begin{proof}[Proof of Lemma~\ref{lem:reduction}]
Take infimum over $\mathbb{A}_{\cG_\cX}$ over the both side of Equation~\ref{eq:augment_problem}, and note that $\mathbb{A}_{\cG_\cX}\subset \mathbb{A}$, Inequality~$\ref{eq:inf_augment_problem}$ is immediate. 

Suppose the group $\cG_\cX$ is compact and let $\mu$ be the Haar measure on it, i.e. $\forall S\subset\cG_\cX,g\in\cG_\cX, \mu(S) = \mu(g\circ S)$. We claim for each algorithm $\cA$, the sample complexity of the following equivariant algorithm $\cA'$ is no higher than that of $\cA$ on $\cP\diamond \cG_\cX$: 
\[\cA'(\nxy) = \cA(\ngxy)\circ g, \textrm{ where } g\sim \mu.\]

By the definition of Haar measure, $\cA'$ is $\cG_\cX$-equivariant. Moreover,  for any fixed $n\ge 0$, we have
\begin{align*}
&\inf\limits_{P\in\cP}\Exp{(\vx_i,y_i)\sim P}{\err_{P}(\cA'(\nxy))} 
=\inf\limits_{P\in\cP}\E\limits_{g\sim \mu}\Exp{(\vx_i,y_i)\sim P\circ g^{-1}}{\err_{P}(\cA(\nxy))} \\
\ge& \inf\limits_{P\in\cP}\inf\limits_{g\in \cG_\cX}\Exp{(\vx_i,y_i)\sim P\circ g^{-1}}{\err_{P}(\cA(\nxy))} 
= \inf\limits_{P\in\cP\circ \cG_\cX}\Exp{(\vx_i,y_i)\sim P}{\err_{P}(\cA(\nxy))}, 
\end{align*}
which implies 
	$\inf_{\cA\in \mathbb{A}_{\cG_\cX}}\cN^*(\cA,\cP,\eps) \le \inf_{\cA\in\mathbb{A}}\cN^*(\cA,\cP\circ \cG_\cX,\eps)$.
\end{proof}

\begin{proof}[Proof of \Cref{thm:augment_function}]
	Simply note that $ (\cP_\cX \diamond \cH)\circ \cG_\cX= \cup_{g\in\cG_\cX} (\cP_\cX\circ g) \diamond (\cH\circ g^{-1}) =\cup_{g\in\cG_\cX} \cP_\cX \diamond (\cH\circ g^{-1}) =\cP_\cX\diamond (\cH\circ\cG_\cX)$, the theorem is immediate from Lemma~\ref{lem:reduction}.
\end{proof}

\subsection{Proof of Theorem~\ref{thm:single_function}}\label{appsec:proof_single_function}

\begin{lem}\label{lem:subclass}
 Define $h_U = \sign{\vx_{1:d}^\top U\ \vx_{d+1:2d}}$, $\forall U\in \reals^{d\times d} $, we have $\cH = \{h_U\mid U\in \cO(d)\}\subseteq \sign{\sum_{i=1}^d x_i^2-\sum_{i=d+1}^{2d} x_i^2 }\circ \cO(2d)$. 	
\end{lem}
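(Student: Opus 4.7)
The plan is to reduce both $h_U$ and $h^*$ to signs of quadratic forms and invoke orthogonal similarity of real symmetric matrices with matching spectra. Split $\vx\in\reals^{2d}$ as $\vx=(\vy,\vz)$ with $\vy=\vx_{1:d}$, $\vz=\vx_{d+1:2d}$. Then
\[
\vx_{1:d}^\top U\,\vx_{d+1:2d}=\vy^\top U\vz=\tfrac{1}{2}\vx^\top M_U\vx,\qquad M_U:=\begin{pmatrix}0 & U\\ U^\top & 0\end{pmatrix},
\]
and
\[
\sum_{i=1}^d x_i^2-\sum_{i=d+1}^{2d}x_i^2=\vx^\top D\vx,\qquad D:=\begin{pmatrix}I_d & 0\\ 0 & -I_d\end{pmatrix}.
\]
So $h_U(\vx)=\sign{\vx^\top M_U\vx}$ and $h^*(\vx)=\sign{\vx^\top D\vx}$.

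Next I would compute the spectrum of $M_U$. Since $U\in\cO(d)$, all singular values of $U$ are $1$, and the eigenvalues of an antidiagonal block matrix $\begin{pmatrix}0 & A\\ A^\top & 0\end{pmatrix}$ are exactly $\pm\sigma_i(A)$. Hence $M_U$ has eigenvalues $+1$ and $-1$, each of multiplicity $d$—exactly the spectrum of $D$. Being real symmetric matrices with identical multisets of eigenvalues, $M_U$ and $D$ are orthogonally similar: there exists $V\in\cO(2d)$ such that $V^\top D\,V=M_U$.

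Plugging this in gives, for every $\vx\in\reals^{2d}$,
\[
h^*(V\vx)=\sign{(V\vx)^\top D(V\vx)}=\sign{\vx^\top V^\top D V\vx}=\sign{\vx^\top M_U\vx}=h_U(\vx),
\]
so $h_U=h^*\circ V\in h^*\circ\cO(2d)$. Since $U\in\cO(d)$ was arbitrary, $\cH\subseteq h^*\circ\cO(2d)$.

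The one substantive step is the spectral computation of $M_U$; everything else is bookkeeping. I do not expect any obstacle, but it is worth noting that orthogonality of $U$ is essential: for a general $U\in\reals^{d\times d}$ the singular values would differ, $M_U$ and $D$ would have different spectra, and the orthogonal similarity needed to embed $h_U$ into $h^*\circ\cO(2d)$ would fail.
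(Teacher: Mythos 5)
Your proof is correct, but it reaches the conclusion by a different route than the paper. Both arguments boil down to showing that the symmetric matrix $M_U=\left(\begin{smallmatrix}0 & U\\ U^\top & 0\end{smallmatrix}\right)$ is orthogonally congruent to $D=\left(\begin{smallmatrix}I_d & 0\\ 0 & -I_d\end{smallmatrix}\right)$; the difference is how that congruence is obtained. The paper constructs the orthogonal change of variables explicitly: it factors $M_U$ as $\left(\begin{smallmatrix}I_d & 0\\ 0 & U^\top\end{smallmatrix}\right)\left(\begin{smallmatrix}0 & I_d\\ I_d & 0\end{smallmatrix}\right)\left(\begin{smallmatrix}I_d & 0\\ 0 & U\end{smallmatrix}\right)$ and then diagonalizes the swap matrix by the explicit $45^\circ$ block rotation, producing a concrete $g_U\in\cO(2d)$ with $h_U=h^*\circ g_U$. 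You instead argue non-constructively: the eigenvalues of $M_U$ are $\pm\sigma_i(U)$, orthogonality of $U$ forces the spectrum $\{+1^{(d)},-1^{(d)}\}$, matching that of $D$, and the spectral theorem then yields some $V\in\cO(2d)$ with $V^\top DV=M_U$. Your spectral facts are standard and correctly applied, the positive factor $\tfrac12$ is harmless under $\sign{\cdot}$, and the composition convention matches the paper's definition of $h^*\circ\cO(2d)$, so the argument is complete. What each buys: your version isolates exactly why orthogonality of $U$ is needed (equal singular values, hence equal inertia with the right eigenvalue magnitudes) and generalizes immediately to any symmetric matrix with spectrum $\{\pm1\}$ of equal multiplicity, while the paper's version is fully explicit, which can be convenient when a concrete witness transformation is wanted and keeps the proof elementary (no appeal to the spectral theorem or to the eigenvalue/singular-value correspondence for block antidiagonal matrices).
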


\begin{proof}
Note that 

\[\left[\begin{matrix}
  0 & U\\
  U^\top& 0 \\
\end{matrix}\right] 
=
\left[\begin{matrix}
  I_d & 0\\
  0 & U^\top \\
\end{matrix}\right] 
\cdot 
\left[\begin{matrix}
  0 & I_d\\
  I_d & 0 \\
\end{matrix}\right] 
\cdot
\left[\begin{matrix}
  I_d & 0\\
  0 & U \\
\end{matrix}\right], 
\]
and 

\[\left[\begin{matrix}
  0 & I_d\\
  I_d& 0 \\
\end{matrix}\right] 
=
\left[\begin{matrix}
  \frac{\sqrt{2}}{2}I_d & -\frac{\sqrt{2}}{2}I_d\\
  \frac{\sqrt{2}}{2}I_d & \frac{\sqrt{2}}{2}I_d \\
\end{matrix}\right] 
\cdot 
\left[\begin{matrix}
  I_d & 0\\
  0 & -I_d \\
\end{matrix}\right] 
\cdot
\left[\begin{matrix}
  \frac{\sqrt{2}}{2}I_d & \frac{\sqrt{2}}{2}I_d\\
  -\frac{\sqrt{2}}{2}I_d & \frac{\sqrt{2}}{2}I_d \\
\end{matrix}\right] ,\]

thus for any $U\in\cO(d)$, $\forall \vx\in\reals^{2d}$, 
\begin{equation}
\begin{split}
h_U(\vx) 
&=\sign{\vx_{1:d}^\top U\ \vx_{d+1:2d}} = \sign{\vx^\top \left[\begin{matrix}
  0 & U\\
  U^\top& 0 \\
\end{matrix}\right] \vx}\\
 =& \sign{g_U(\vx)^\top \left[\begin{matrix}
  I_d & 0\\
  0 & -I_d \\
\end{matrix}\right] g_U(\vx)} \in h^*\circ \cO(2d),
\end{split}
\end{equation}

where $g_U(\vx) = 
\left[\begin{matrix}
  I_d & 0\\
  0 & U \\
\end{matrix}\right]\cdot \left[\begin{matrix} 
  \frac{\sqrt{2}}{2}I_d & -\frac{\sqrt{2}}{2}I_d\\
  \frac{\sqrt{2}}{2}I_d & \frac{\sqrt{2}}{2}I_d \\
\end{matrix}\right] \cdot
\vx
 $ is an orthogonal transformation on $\reals^{2d}$.


\end{proof}

\begin{lem}\label{lem:vcd_orthogonal}
	Define $h_U = \sign{\vx_{1:d}^\top U\ \vx_{d+1:2d}}$, $\forall U\in \reals^{d\times d}$, and $\cH = \{ h_U \mid U\in\cO(d)\}$, we have 
	\[ \vcd(\cH)\ge \frac{d(d-1)}{2}.\]
	
\end{lem}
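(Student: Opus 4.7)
The target $\frac{d(d-1)}{2}$ equals the dimension of $\cO(d)$ as a manifold, i.e., the dimension of its Lie algebra $\so(d)$ (the space of real skew-symmetric $d\times d$ matrices). The plan is to exhibit a shattered set of size $\frac{d(d-1)}{2}$ by using a small perturbation of the identity via the matrix exponential on $\so(d)$, where signs of off-diagonal entries can be prescribed arbitrarily.

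Concretely, index the $\frac{d(d-1)}{2}$ unordered pairs with $1\le i<j\le d$. For each such pair, define the test point $\vx^{(i,j)}\in\reals^{2d}$ by $\vx^{(i,j)}_{1:d}=\ve_i$ and $\vx^{(i,j)}_{d+1:2d}=\ve_j$. Then for any $U\in\cO(d)$,
\[
h_U\bigl(\vx^{(i,j)}\bigr)=\sign{\ve_i^\top U\ve_j}=\sign{U_{ij}}.
\]
Hence shattering reduces to realizing every sign pattern on the super-diagonal entries $\{U_{ij}\}_{i<j}$ by some orthogonal $U$.

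Given an arbitrary target pattern $\eps\in\{\pm1\}^{d(d-1)/2}$, I would build a skew-symmetric matrix $A\in\so(d)$ with $A_{ij}=\eps_{ij}$ for $i<j$, $A_{ji}=-\eps_{ij}$, and $A_{ii}=0$. Set $U(t):=e^{tA}$; since $A\in\so(d)$, we have $U(t)\in\cO(d)$ for every $t\in\reals$. The Taylor expansion $U(t)=I+tA+O(t^2)$ (with the $O(t^2)$ bound uniform in the finitely many coordinates) gives
\[
U(t)_{ij}=t\,\eps_{ij}+O(t^2)\qquad\text{for every }i<j.
\]
Since each $\eps_{ij}$ is a nonzero constant, there exists a threshold $t_0=t_0(\eps)>0$ such that for all $0<t<t_0$, $\sign{U(t)_{ij}}=\eps_{ij}$ simultaneously for every $i<j$. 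Picking any such $t$ yields an orthogonal $U\in\cO(d)$ with $h_U(\vx^{(i,j)})=\eps_{ij}$ for every pair, showing the set $\{\vx^{(i,j)}\}_{i<j}$ is shattered and therefore $\vcd(\cH)\ge\frac{d(d-1)}{2}$.

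There is no genuine obstacle here; the only thing to be careful about is the $t\to 0$ perturbation argument, namely that the uniform $O(t^2)$ remainder across the $\frac{d(d-1)}{2}$ coordinates lets one choose a single $t$ that works for all entries at once. The crux of the proof is conceptual: recognizing that the signs of the strictly upper-triangular block of $U=e^{tA}$ at the identity are exactly freely parameterized by $\so(d)$, whose dimension matches the target VC lower bound.
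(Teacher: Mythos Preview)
Your proposal is correct and essentially identical to the paper's proof: the paper also shatters $\{\ve_i+\ve_{d+j}\}_{i<j}$ (equivalently your $\vx^{(i,j)}$), reduces to prescribing signs of $U_{ij}$ for $i<j$, and uses $U=\exp(\lambda u)$ with $u\in\so(d)$ having entries $u_{ij}=\sigma_{ij}$, then invokes the first-order expansion $\exp(\lambda u)_{ij}=\lambda\sigma_{ij}+O(\lambda^2)$ for small $\lambda>0$. Your write-up is slightly more careful in noting uniformity of the remainder across the finitely many coordinates, but the argument is the same.
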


\begin{proof}
	Now we claim $\cH$ shatters $\{\ve_i+\ve_{d+j}\}_{1\le i<j\le d}$, i.e. $\cO(d)$ can shatter $\{\ve_i\ve_j^\top\}_{1\le i<j\le d}$, or for any sign pattern $\{\sigma_{ij}\}_{1\le i<j\le d}$, there exists $U\in\cO(d)$, such that $\sign{\inp{U}{\ve_i\ve_j^\top}} = \sigma_{ij}$, which implies $\vcd(\cH)\ge \frac{d(d-1)}{2}$.

Let $\sod = \{M\mid M=-M^\top, M\in \reals^{d\times d}\}$, we know 
\[\exp(u) = I_d + u + \frac{u^2}{2} +\cdots \in \SOd,\ \forall u\in\sod.\]

Thus for any sign pattern $\{\sigma_{ij}\}_{1\le i<j\le d}$, let $u= \sum\limits_{1\le i<j\le d}\sigma_{ij}(\ve_i\ve_j^\top-\ve_j\ve_i^\top)$ and $\lambda\to 0^+$,
\[\sign{\inp{\exp(\lambda u)}{\ve_i\ve_j^\top}} = \sign{0+ \lambda\sigma_{ij}+ O(\lambda^2)} = \sign{ \sigma_{ij}+ O(\lambda)} = \sigma_{ij}.\]
\end{proof}

\singlefunction*
\begin{proof}[Proof of Theorem~\ref{thm:single_function}]

	\textbf{Lower bound:} Suppose $d=2{d'}$ for some integer ${d'}$, we construct $\cP = \cP_\cX \diamond \cH $, where $\cP_\cX$ is the set of all possible distributions on $\cX=\reals^{3k}$, and $\cH = \{\sign{\sum_{i=1}^{d'} x_i^2 - \sum_{i={d'}+1}^{2{d'}} x_i^2}\}$. By \Cref{lem:subclass}, $\cH' = \{\sign{\vx_{1:d}^\top U\ \vx_{d+1:2d}}\mid U\in\cO(d')\}\subseteq \cH\circ \cO(d)$. By \Cref{thm:augment_function}, we have 
	
	\begin{equation}
	\inf_{\cA\in \mathbb{A}_{\cG_\cX}}\cN^*(\cA,\cP_\cX\diamond \cH,\eps) 
	\ge \inf_{\cA\in\mathbb{A}}\cN^*(\cA,\cP_\cX\diamond (\cH\circ\cG_\cX),\eps) 
	\ge \inf_{\cA\in\mathbb{A}}\cN^*(\cA,\cP_\cX\diamond \cH',\eps)
\end{equation}

By the lower bound in \Cref{thm:VC}, we have $\inf_{\cA\in\mathbb{A}}\cN^*(\cA,\cP_\cX\diamond \cH',\eps)\ge \frac{\vcd(\cH')+\ln\frac{1}{\delta}}{\eps}$. By \Cref{lem:vcd_orthogonal} $\vcd(\cH')\ge \frac{d'(d'-1)}{2} = \Omega(d^2)$.

	\textbf{Upper Bound:} 	 Take $\cnn$ as defined in Section~\ref{subsec:para_models} with $d=2d',r=2,k=1, \sigma:\reals^{d'}\to\reals, \sigma(\vx) = \sum_{i=1}^{d'}x_i^2$ (square activation + average pooling), we have $\cF_\cnn =$$ \left\{\sign{ \sum_{i=1}^2a_i \sum_{j=1}^{d'}x^2_{(i-1)d'+j}w_1^2 +b }|a_1,a_2,w_1,b\in~\reals\right\}$.
	
Note that $\min\limits_{h\in\cF_{\cnn}} \err_P(h) = 0$, $\forall P\in\cP$, and the VC dimension of $\cF$ is 3, by \Cref{thm:VC}, we have $\forall P\in\cP$, w.p. $1-\delta$, 
\(\err_P(\erm_{\cF_\cnn}(\nxy)) \le \eps,\)
if $n=\Omega\left(\frac{1}{\eps}\left( \log\frac{1}{\eps}+ \log\frac{1}{\delta})\right)\right)$.

\paragraph{Convergence guarantee for Gradient Descent:} 

We initialize all the parameters by i.i.d. standard gaussian and train the second layer  by gradient descent only, i.e. set the LR of $w_1$ as 0. (Note training the second layer only  is still a orthogonal-equivariant algorithm for FC nets, thus it's a valid separation.)

For any convex non-increasing surrogate loss of 0-1 loss $l$ satisfying $l(0)\ge 1, \lim_{x\to \infty}l(x)=0$ e.g. logistic loss, we define the loss of the weight $\vW$ as ($x_{k,i}$ is the $k$th coordinate of $\vx_i$)
	\begin{align*}
	\Loss (\vW) = \sum_{i=1}^n l(\cF_\cnn[\vW](\vx_i)y_i) = \sum_{i=1}^n l\left(\left( \sum_{k=1}^2a_i \sum_{j=1}^{d'}x^2_{(k-1)d'+j,i}w_1^2 +b \right)y_i\right),
	\end{align*}

which is convex in $a_i $ and $b$. Note $w_1\neq 0$ with probability 1, which means the data are separable even with fixed first layer, i.e. $\min_{\va,b} \Loss (\vW) = \Loss (\vW)\mid_{\va = \va^*, b=0} =  0$, where $\va^*$ is the ground truth. Thus with sufficiently small step size, GD converges to 0 loss solution. By the definition of surrogate loss, $\Loss(\vW)< 1$ implies for $\vx_i$, $l(\vx_iy_i)<1$ and thus the training error is $0$.
\end{proof}

\subsection{Proofs of Lemmas for \Cref{thm:1vsdsq_fixed}}
\begin{lem}\label{lem:packing_orthogonal_group}
	Define $h_U = \sign{\vx_{1:d}^\top U\ \vx_{d+1:2d}}$, $\cH = \{h_U\mid U\in \cO(d)\}$, and $\rho(U,V) := \rho_\cX(h_U,h_V) = \prob{\vx\sim N(0,I_{2d})}{h_U(\vx)\neq h_V(\vx)}$. There exists a constant $C$, such that the packing number $D(\cH, \rho_\cX, \eps)= D(\cO(d), \rho, \eps) \ge \left(\frac{C}{\eps}\right)^\frac{d(d-1)}{2}$.
\end{lem}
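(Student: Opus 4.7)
The plan follows the two-step outline given after the lemma statement. Step 1: translate $\rho(h_U,h_V)$ to the Frobenius distance $\|U-V\|_F/\sqrt{d}$. Step 2: pack $\cO(d)$ in Frobenius norm at the corresponding scale via a tangent-space / volume argument. The subtle point is that $\rho$ relates to $\|\cdot\|_F$ with a $1/\sqrt{d}$ factor, so the packing must exploit the ``$\sqrt{d}$-scale'' of $\cO(d)$ to reach the correct exponent $d(d-1)/2$.

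For step one, I fix $U,V\in\cO(d)$ and write a sample as $(\vz,\vy)$ with $\vz,\vy\sim N(0,I_d)$ independent. Conditional on $\vz$, the pair $(\vz^\top U\vy,\vz^\top V\vy)$ is jointly centered Gaussian in $\vy$ with common variance $\|\vz\|^2$ (by orthogonality) and covariance $\vz^\top UV^\top\vz$, so the standard bivariate orthant formula gives
\[
\prob{\vy}{\sign{\vz^\top U\vy}\neq \sign{\vz^\top V\vy} \mid \vz} = \tfrac{1}{\pi}\arccos\!\left(\tfrac{\vz^\top UV^\top\vz}{\|\vz\|^2}\right).
\]
Taking expectations in $\vz$, then applying \Cref{lem:arccos} ($\arccos t\geq \sqrt{2(1-t)}$) together with the identity $\|\vz\|^2 - \vz^\top UV^\top\vz = \tfrac{1}{2}\|(U-V)^\top\vz\|^2$ (which follows from $(U-V)(U-V)^\top = 2I-UV^\top-VU^\top$ and orthogonality), yields $\rho(U,V) \geq \tfrac{1}{\pi}\,\E_\vz\bigl[\|(U-V)^\top\vz\|/\|\vz\|\bigr]$. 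Since $\vz/\|\vz\|$ is uniform on $S^{d-1}$, \Cref{lem:lower_bound_distance} then concludes $\rho(U,V) \geq (C_0/\pi)\|U-V\|_F/\sqrt{d}$ for an absolute constant $C_0>0$.

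For step two, set $r = \pi\eps\sqrt{d}/C_0$ so that any Frobenius $r$-packing of $\cO(d)$ is automatically an $\eps$-packing under $\rho$; it suffices to prove $D(\cO(d),\|\cdot\|_F,r) \gtrsim (\sqrt{d}/r)^{D}$ with $D=d(d-1)/2$. I realize such a packing by pulling a packing of the operator-norm ball $\Omega_\lambda := \{A\in\sod : \|A\|_2\leq \lambda\}$ through the exponential map, for a small constant $\lambda\in(0,\ln 2)$. Two ingredients drive the argument. (i) \emph{$\exp$ is bi-Lipschitz on $\Omega_\lambda$ in Frobenius norm:} expanding $e^A-e^B = (A-B)+\sum_{k\geq 2}(A^k-B^k)/k!$ and using $\|A^k-B^k\|_F \leq k\lambda^{k-1}\|A-B\|_F$ on $\Omega_\lambda$ gives $(2-e^\lambda)\|A-B\|_F \leq \|e^A-e^B\|_F\leq e^\lambda\|A-B\|_F$. (ii) \emph{$\Omega_\lambda$ has Lebesgue volume of order $(\lambda\sqrt{d})^D$}, equivalently, the Haar measure of the Frobenius ball of radius $r$ in $\cO(d)$ is $\lesssim (Cr/\sqrt{d})^D$ for $r\lesssim \sqrt{d}$. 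Combining (i), (ii), and a standard Euclidean volume-packing bound in $\sod\cong \reals^D$ then produces, after transport by $\exp$, a Frobenius $r$-packing of $\cO(d)$ of size at least $(c'\lambda\sqrt{d}/r)^D = (C/\eps)^D$, as claimed.

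The main technical obstacle is item (ii): establishing the correct $\sqrt{d}^{\,D}$-scale of $\vol(\Omega_\lambda)$. A naive inclusion of the Frobenius ball of radius $\lambda$ in $\Omega_\lambda$ loses a $d^{D/2}$ factor and would yield only the vacuous bound $(C/(\eps\sqrt{d}))^D$. The correct estimate follows from the Weyl integration formula for $SO(d)$---whose Vandermonde-squared eigenvalue density $\prod_{j<k}|e^{i\theta_j}-e^{i\theta_k}|^2$ contributes exactly the missing $d^{D/2}$ upon integration---or equivalently, from the random-matrix fact that a uniform draw from the Frobenius sphere of radius $\lambda\sqrt{d}$ in $\sod$ has operator norm $O(\lambda)$ with high probability, so $\Omega_\lambda$ captures a constant-fraction of that sphere. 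All remaining pieces---the orthant formula, the Lipschitz bound on $\exp$, and the Euclidean volume-packing argument---are routine.
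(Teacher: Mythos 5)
Your proof is correct and follows the same skeleton as the paper's: step 1 (the orthant/arccos computation, the identity $\norm{\vz}^2-\vz^\top UV^\top\vz=\tfrac12\norm{(U-V)^\top\vz}^2$, and Lemma~\ref{lem:lower_bound_distance}) is essentially identical, and step 2 likewise reduces the $\rho$-packing to a Frobenius packing of an operator-norm neighborhood in $\sod$ transported through $\exp$, followed by a Euclidean volume argument. Where you genuinely diverge is in the two technical ingredients of step 2. For the bi-Lipschitz property of $\exp$ on the operator-norm ball you give a self-contained power-series estimate with constants $(2-e^{\lambda},e^{\lambda})$, whereas the paper cites Szarek's Lemma~4 (constants $0.4$ and $1$ on $\norm{A}_\infty\le\pi/4$); only the lower Lipschitz bound is actually needed for packing, and yours is fine. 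For the crucial ``$\sqrt{d}$-scale'' volume estimate, the paper lower-bounds the ratio $\vol(\sod\cap B^{d^2}_\infty)/\vol(\sod\cap B^{d^2}_2)$ via the inverse Santal\'o inequality combined with $\E\norm{G}_\infty\lesssim\sqrt d$ for Gaussian $G$, while you argue directly that $\{A\in\sod:\norm{A}_2\le\lambda\}$ captures a constant fraction of the Frobenius ball of radius $c\lambda\sqrt d$, using the fact that a uniform point on that Frobenius sphere has operator norm $O(\lambda)$ with high probability; this rests on the same Gaussian operator-norm bound plus concentration of the Frobenius norm, so it is an elementary substitute for the Santal\'o-based argument and yields the same $(C/\eps)^{d(d-1)/2}$ bound. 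One caveat: the Weyl-integration remark is loose as stated (the density $\prod_{j<k}|e^{i\theta_j}-e^{i\theta_k}|^2$ is the unitary-group formula, not the $\SOd$ one), but since you offer the concentration argument as an equivalent and correct justification, this does not affect the proof.
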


\begin{proof}[Proof of \Cref{lem:packing_orthogonal_group}]
	The key idea here is to first lower bound $\rho_\cX(U,V)$ by $\norm{U-V}_F/\sqrt{d}$ and apply volume argument in the tangent space of $I_d$ in $\cO(d)$. We have

		\begin{equation}
			\begin{split}
				\rho_\cX(h_U,h_V) & = \prob{\vx\sim N(0,I_{2d})}{h_U(\vx)\neq h_V(\vx)}\\
				= & \prob{\vx\sim N(0,I_{2d})}{\left(\vx_{1:d}^\top U\ \vx_{d+1:2d} \right)\left(\vx_{1:d}^\top V\ \vx_{d+1:2d}\right)<0}\\		
				=&\frac{1}{\pi}\Exp{\vx_{1:d}\sim N(0,I_{d})}{\arccos\left( \frac{\vx_{1:d}^\top U V^\top\vx_{1:d}}{\norm{\vx_{1:d}}^2}\right)}\\
				\ge & \frac{{1}}{\pi}\Exp{\vx_{1:d}\sim N(0,I_{d})}{\sqrt{ 2- 2\frac{\vx_{1:d}^\top U V^\top\vx_{1:d}}{\norm{\vx_{1:d}}^2}}}\ (\textrm{by Lemma~\ref{lem:arccos}})  \\
				= & \frac{1}{\pi} \Exp{\vx\sim S_{d-1}}{\sqrt{ 2- 2\vx^\top U V^\top\vx}}\\
				= & \frac{1}{\pi} \Exp{\vx\sim S_{d-1}}{\norm{ (U^\top-V^\top)\vx}_F}\\
				\ge &C_1\norm{U-V}_F/\sqrt{d}\ (\textrm{by Lemma~\ref{lem:lower_bound_distance}}) 
			\end{split}
		\end{equation}

	 Below we show it suffices to pack in the 0.4 $\ell_\infty$ neighborhood of $I_d$. 	
	Let $\sod$ be the Lie algebra of $SO(d)$, i.e., $\{M\in\reals^{d\times d}\mid M= -M^\top\}$. We also define the \emph{matrix exponential mapping} $\exp:\reals^{d\times d} \to \reals^{d\times d}$, where $\exp(A) = A + \frac{A^2}{2!}+\frac{A^3}{3!}+\cdots$. It holds that $\exp(\sod) = \SOd\subseteq \cO(d)$. The benefit of covering in such neighborhood is that it allows us to translate the problem into the tangent space of $I_d$ by the following lemma.
	
	\begin{lem}[Implication of Lemma 4 in \citep{szarek1997metric}]
		For any matrix $A,B\in \sod$, satisfying that $\norm{A}_\infty\le \frac{\pi}{4}, \norm{B}_\infty\le \frac{\pi}{4}$, we have   
		\begin{equation}
		0.4\norm{A-B}_F \le 	\norm{\exp(A)-\exp(B)}_F\le \norm{A-B}_F.
		\end{equation}

	\end{lem}
	
	Therefore, we have 
	
	\begin{equation}
		D(\cH, \rho_\cX, \eps)\ge D(\cO(d), C_1\norm{\cdot}_F/\sqrt{d},\eps )\ge D(\sod\cap \frac{\pi}{4} B^{d^2}_{\infty}, C_1\norm{\cdot}_F/\sqrt{d}, 2.5\eps ).
	\end{equation}

	Note that $\sod$ is a $\frac{d(d-1)}{2}$-dimensional subspace of $\reals^{d^2}$, by Inverse Santalo’s inequality (Lemma~3, \citep{ma2015volume}), we have 
	
	\[ \left(\frac{\vol(\sod\cap  B^{d^2}_{\infty}) }{\vol(\sod\cap B^{d^2}_2) }\right)^{\frac{2}{d(d-1)}}  \ge C_2 \frac{\sqrt{\textrm{dim}(\sod)}}{\Exp{G\sim N(0,I_{d^2})}{ \norm{{\Pi_{\sod}(G)}}_\infty} }.\]

	where $\vol(\cdot)$ is the $\frac{d(d-1)}{2}$ volume defined in the space of $\sod$ and $\Pi_{\sod}(G) = \frac{G-G^\top}{2}$ is the projection operator onto the subspace $\sod$.  We further have 
	
	\[\Exp{G\sim N(0,I_{d^2})}{\norm{{\Pi_{\sod}(G)}}_\infty} = \Exp{G\sim N(0,I_{d^2})}{\norm{\frac{G-G^\top}{2}}_\infty} \le \Exp{G\sim N(0,I_{d^2})}{\norm{G}_\infty}\le C_3 \sqrt{d},\]
	
	where the last inequality is by Theorem~4.4.5, \cite{vershynin_2018}.
	
	Finally, we have
	
	\begin{equation}
	\begin{split}
&D(\sod\cap \frac{\pi}{4} B^{d^2}_{\infty}, C_1\norm{\cdot}_F/\sqrt{d}, 2.5\eps)  \\
=& D(\sod\cap  B^{d^2}_{\infty}, \norm{\cdot}_F, \frac{10\sqrt{d}\eps}{C_1\pi})\\
\ge &  \frac{\vol(\sod\cap  B^{d^2}_{\infty}) }{\vol(\sod\cap B^{d^2}_2) } \times \left( \frac{C_1\pi}{10\sqrt{d}\eps}\right)^{\frac{d(d-1)}{2}}  \\
	\ge &  \left( \frac{C_1C_2\pi \sqrt{\frac{d(d-1)}{2}}}{10d\eps}\right)^{\frac{d(d-1)}{2}}\\
	:= & \left(\frac{C}{\eps}\right)^{\frac{d(d-1)}{2}}
	\end{split}
	\end{equation}
	
\end{proof}

\subsection{Proof of Theorem~\ref{thm:single_dist_multi_function}}\label{subsec:single_dist_multi_function}

\singledistmultifunction*
\begin{proof}[Proof of Theorem~\ref{thm:single_dist_multi_function}]
	\textbf{Lower bound:} Note $\cP = \{N(0,I_d)\}\diamond \cH$, where $\cH = \{ \sign{\sum_{i=1}^d \alpha_i x_i^2}\mid \alpha_i\in\reals\}$. Since $N(0,I_d)$ is invariant under all orthogonal transformations, by Theorem~\ref{thm:augment_function}, $\inf\limits_{\textrm{equivariant }\cA}\cN^*(\cA, N(0,I_d)\circ\cH, \eps_0 ) = \inf\limits_{\cA }\cN^*(\cA, {N(0,I_d)}\diamond(\cH \circ \cO(d)) , \eps_0 )$. Furthermore, it can be show that $\cH \circ \cO(d)= \{\sign{\sum_{i,j}\beta_{ij}x_ix_j}\mid \beta_{ij}\in \reals\}$, the sign functions of all quadratics in $\reals^d$. 	Thus it suffices to show learning quadratic functions on Gaussian  distribution needs $\Omega(d^2/\eps)$ samples for any algorithm (see Lemma~\ref{lem:lb_quadratic_eps}, where we assume the  dimension $d$ can be divided by $4$).

\textbf{Upper bound:}Take $\cnn$ as defined in Section~\ref{subsec:para_models} with $d=d',r=1,k=1, \sigma:\reals\to\reals, \sigma(x) = x^2$ (square activation + no pooling), we have $\cF_\cnn = \left\{\sign{ \sum_{i=1}^da_i w_1^2x_i^2 + b}|a_i, w_1, b\in~\reals \right\}=\left\{\sign{ \sum_{i=1}^da_i x_i^2  +b }|a_i,b\in\reals \right\}$.

Note that $\min\limits_{h\in\cF_{\cnn}} \err_P(h) = 0$, $\forall P\in\cP$, and the VC dimension of $\cF$ is $d+1$, by \Cref{thm:VC}, we have $\forall P\in\cP$, w.p. $1-\delta$, 
\(\err_P\left(\erm_{\cF_\cnn}(\nxy)\right) \le \eps,\)
if $n=\Omega\left(\frac{1}{\eps}\left( d\log\frac{1}{\eps}+ \log\frac{1}{\delta})\right)\right)$.

\paragraph{Convergence guarantee for Gradient Descent:} We initialize all the parameters by i.i.d. standard gaussian and train the second layer  by gradient descent only, i.e. set the LR of $w_1$ as 0. (Note training the second layer only  is still a orthogonal-equivariant algorithm for FC nets, thus it's a valid separation.)

For any convex non-increasing surrogate loss of 0-1 loss $l$ satisfying $l(0)\ge 1, \lim_{x\to \infty}l(x)=0$ e.g. logistic loss, we define the loss of the weight $\vW$ as ($x_{k,i}$ is the $k$th coordinate of $\vx_i$)
	\begin{align*}
	\Loss (\vW) = \sum_{i=1}^n l(\cF_\cnn[\vW](\vx_i)y_i) = \sum_{i=1}^n l\left((\sum_{k=1}^dw_1^2a_i x_{k,i}^2  +b )y_i\right),
	\end{align*}

which is convex in $a_i $ and $b$. Note $w_1\neq 0$ with probability 1, which means the data are separable even with fixed first layer, i.e. $\min_{\va,b} \Loss (\vW) = \Loss (\vW)\mid_{\va = \va^*, b=0} =  0$, where $\va^*$ is the ground truth. Thus with sufficiently small step size, GD converges to 0 loss solution. By the definition of surrogate loss, $\Loss(\vW)< 1$ implies for $\vx_i$, $l(\vx_iy_i)<1$ and thus the training error is $0$.

\end{proof}

\subsection{Proof of \Cref{lem:lb_quadratic_eps}}

	\begin{lem}\label{lem:lb_quadratic_eps}
		For $A\in\reals^{d\times d}$, we define $M_A\in\reals^{2d\times 2d}$ as $M_A = \left[\begin{matrix}
  A & 0\\
  0& I_d \\
\end{matrix}\right] $, and $h_A:\reals^{4d}\to \{-1,1\}$ as $h_A(\vx)=\sign{ \vx_{1:2d}^\top M_A\vx_{2d+1:4d}}$. Then for $\cH = \{h_A \mid \forall A\in \reals^{d\times d}\} \subseteq \{\sign{ \vx^\top A\vx]| \forall A\in \reals^{4d\times 4d}}\}$, satisfies that  it holds that for any $d$, algorithm $\cA$ and $\eps>0$,	
	 \[\cN^*(\cA, \{N(0,I_{4d})\}\diamond \cH, \eps ) = \Omega(\frac{d^2}{\eps}).\]	
	\end{lem}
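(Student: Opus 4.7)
I apply Long's improvement of Benedek--Itai's lower bound \eqref{eq:long_bound}. Two estimates are needed:
(i) the VC-dimension bound $\vcd(\cH)=\Theta(d^2)$, and
(ii) a packing lower bound $D(\cH,\rho_\cX,\eta)\ge (c_0/\eta)^{d^2}$ for some absolute constant $c_0>0$ and all small enough $\eta$.
Combining (i)-(ii) with the Markov-inequality conversion $\cN^*(\cA,\cP,\eps)\ge\cN(\cA,\cP,2\eps,\tfrac12)$ and \eqref{eq:long_bound} yields $\cN^*(\cA,\cP,\eps)\ge \tfrac{\vcd(\cH)}{e}\bigl(\tfrac12 D(\cH,\rho_\cX,4\eps)\bigr)^{1/\vcd(\cH)}=\Omega(d^2/\eps)$, as required.

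\textbf{VC dimension.} The upper bound $\vcd(\cH)\le d^2+1$ is immediate since, for every fixed $\vx$, $h_A(\vx)$ is the sign of an affine function of the $d^2$ entries of $A$. For the matching lower bound, the $d^2$ points $\vx^{(i,j)}:=(\ve_i,\mathbf{0},\ve_j,\mathbf{0})\in\R^{4d}$ satisfy $h_A(\vx^{(i,j)})=\sgn(A_{ij})$, so $\cH$ shatters them by freely varying the entries of $A$.

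\textbf{Packing.} Decompose $\vx=(\vx_1,\vx_2,\vx_3,\vx_4)\in(\R^d)^4$ so that $h_A(\vx)=\sgn(\vx_1^\top A\vx_3+N)$, where $N:=\vx_2^\top\vx_4$ is independent of $(\vx_1,\vx_3)$. Set $S_i=\vx_1^\top A_i\vx_3$ and $\Delta=\vx_1^\top(A_1-A_2)\vx_3$. Conditional on $(\vx_1,\vx_3)$, the disagreement event equals $\{N\in I\}$ for an interval $I$ of length $|\Delta|$ whose center lies in $[-\max(S_1,S_2),-\min(S_1,S_2)]$. A direct computation (or CLT) shows the density $f_N$ of $N$ satisfies $f_N(t)\ge c_1/\sqrt d$ uniformly on $|t|\le c_2\sqrt d$. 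Restricting the integration to the constant-probability event $\cE:=\{|S_1|,|S_2|\le c_2\sqrt d/2\}$---which holds with probability $\ge \tfrac{1}{2}$ whenever $A_1,A_2\in B_{\sqrt d}:=\{A:\|A\|_F\le\sqrt d\}$, by a Chebyshev/Hanson--Wright-type bound on the quadratic forms $S_i$---one obtains
\[
\rho_\cX(h_{A_1},h_{A_2})\ \ge\ \frac{c_1}{\sqrt d}\cdot \E\bigl[|\Delta|\,\mathbf{1}_\cE\bigr]\ \ge\ c_3\,\frac{\|A_1-A_2\|_F}{\sqrt d},
\]
where the last inequality uses Lemma~\ref{lem:lower_bound_distance} to control $\E|\Delta|$ from below by $\|A_1-A_2\|_F$ and subtracts the $\cE^c$-contribution via Cauchy--Schwarz. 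Hence any Frobenius $(\eta\sqrt d/c_3)$-packing of $B_{\sqrt d}$ is a $\rho_\cX$-$\eta$-packing of $\cH$; the standard volume argument in the $d^2$-dimensional space of matrices produces one of cardinality at least $(\sqrt d/(\eta\sqrt d/c_3))^{d^2}=(c_3/\eta)^{d^2}$, proving (ii).

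\textbf{Main obstacle.} The delicate step is the uniform lower bound $\rho_\cX(h_{A_1},h_{A_2})\gtrsim \|A_1-A_2\|_F/\sqrt d$ across the \emph{entire} ball $B_{\sqrt d}$, not merely near the origin. This requires simultaneously controlling (a) the density of the chi-square-like product $\vx_2^\top\vx_4$ over a $\Theta(\sqrt d)$-long range (so that the density does not decay on the relevant interval), (b) the probability that the random center of the flip interval lands in that range, and (c) anti-concentration of the bilinear form $\vx_1^\top M\vx_3$ for $\vx_1,\vx_3\sim N(0,I_d)$, in the spirit of Lemmas~\ref{lem:anti-concentration} and~\ref{lem:lower_bound_distance}. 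The choice of packing radius $R=\sqrt d$ (rather than, say, $R=O(1)$) is essential: it balances the signal $\vx_1^\top A\vx_3$ against the noise $N$ at the same scale $\sqrt d$, which is what upgrades a naive $\Omega(d^{3/2}/\eps)$ bound (obtained from a constant-radius packing) to the claimed $\Omega(d^2/\eps)$.
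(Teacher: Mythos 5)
Your proposal is correct in its essentials and assembles the same two ingredients as the paper: a packing lower bound $D(\cH,\rho_\cX,\eta)\ge (c/\eta)^{d^2}$ driven by a separation estimate $\rho_\cX(h_{A_1},h_{A_2})\gtrsim \norm{A_1-A_2}_F/\sqrt{d}$ over a parameter set of Frobenius diameter $\Theta(\sqrt d)$, followed by Long's refinement \Cref{eq:long_bound} of Benedek--Itai together with $\vcd(\cH)\approx d^2$. The route to the separation estimate, however, is genuinely different. The paper restricts to the spectral neighborhood $I_d+0.1B^{d^2}_{\infty}$ of the identity, writes the disagreement probability exactly as a Gaussian orthant probability $\frac{1}{\pi}\E\arccos(\cdot)$ over $\vx_{1:2d}$, and then needs \Cref{lem:arccos}, the norm-difference estimate \Cref{lem:avg} to absorb the identity block, \Cref{lem:lower_bound_distance}, and an inverse-Santal\'o-type volume comparison between the Schatten-$\infty$ and Frobenius balls. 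You instead pack the Frobenius ball of radius $\sqrt d$ and exploit that the identity block produces an independent noise $N=\vx_2^\top\vx_4$ whose density is $\gtrsim 1/\sqrt d$ on a $\Theta(\sqrt d)$ window, so disagreement becomes an interval-hitting event of length $\abs{\Delta}$, handled by a Cauchy--Schwarz truncation; this bypasses both the arccos computation and the Santal\'o volume ratio, makes the $\sqrt d$-scaling mechanism (signal and noise both of size $\sqrt d$) explicit, and the volume count inside a Frobenius ball is immediate.

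Two small repairs are needed. First, "$\Pr[\cE]\ge \tfrac12$" is not by itself enough: your subtraction $\E[\abs{\Delta}\mathbf{1}_\cE]\ge \E\abs{\Delta}-\norm{A_1-A_2}_F\sqrt{\Pr[\cE^c]}$ is only useful when $\Pr[\cE^c]$ is below the square of the absolute constant $c$ in $\E\abs{\Delta}\ge c\norm{A_1-A_2}_F$; since Chebyshev gives $\Pr[\cE^c]\le 8/c_2^2$ when $\norm{A_i}_F\le\sqrt d$, this is fixed by taking $c_2$ a sufficiently large absolute constant, at the harmless cost of a smaller density constant $c_1(c_2)$. Second, with only $\vcd(\cH)\le d^2+1$, \Cref{eq:long_bound} yields $\Omega\bigl(d^2\,\eps^{-d^2/(d^2+1)}\bigr)$ rather than the clean $\Omega(d^2/\eps)$; to match the rate (the paper plugs in $\vcd(\cH)=d^2$), observe that $\cH$ consists of signs of the $d^2$-dimensional \emph{affine} family $\inp{A}{\vx_1\vx_3^\top}+\vx_2^\top\vx_4$, and an affine subspace of dimension $d^2$ in $\reals^n$ cannot meet all $2^n$ open orthants when $n>d^2$, so $\vcd(\cH)\le d^2$ (equivalently, one may bound the growth function directly by the cell count $\sum_{i\le d^2}\binom{n}{i}$ of a hyperplane arrangement in $\reals^{d^2}$ and skip Sauer--Shelah). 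With these adjustments your argument is a complete and somewhat more elementary alternative to the paper's proof.
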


\begin{proof}[Proof of \Cref{lem:lb_quadratic_eps}]

	Below we will prove a $\Omega(\left( \frac{1}{\eps}\right)^{d^2})$ lower bound for  packing number, i.e. $ D(\cH,\rho_\cX, 2\eps_0)= D(\reals^{d\times d},\rho, 2\eps_0)$, where $\rho(U,V) = \rho_\cX(h_U,h_V)$. Then we can apply Long's improved version \Cref{eq:long_bound} of Benedek-Itai's lower bound and get a $\Omega(d^2/\eps)$ sample complexity lower bound. The reason that we can get the correct rate of $\eps$ is that the $\vcd(\cH)$ is exactly equal to the exponent of the packing number. (cf. the proof of \Cref{thm:1vsdsq_fixed})
	
	Similar to the proof of \Cref{thm:1vsdsq_fixed}, the key idea here is to first lower bound $\rho(U,V)$ by $\norm{U-V}_F/\sqrt{d}$ and apply volume argument. Recall for $A\in\reals^{d\times d}$, we define $M_A\in\reals^{2d\times 2d}$ as $M_A = \left[\begin{matrix}
  A & 0\\
  0& I_d \\
\end{matrix}\right] $, and $h_A:\reals^{4d}\to \{-1,1\}$ as $h_A(\vx)=\sign{ \vx_{1:2d}^\top M_A\vx_{2d+1:4d}}$. Then for $\cH = \{h_A \mid \forall A\in \reals^{d\times d}\}$	. Below we will see it suffices to lower bound the packing number of a subset of $\reals^{d\times d}$, i.e. $I_d+0.1B_\infty^{d^2}$, where $B_\infty^{d^2}$ is the unit spectral norm ball. Clearly $\forall \vx, \norm{\vx}_2 = 1,\forall U\in I_d+0.1B_\infty^{d^2}$, $0.9\le \norm{U\vx}_2\le 1.1$.

	Thus $\forall U,V \in I_d+0.1B_\infty^{d^2}$ we have, 
		\begin{equation}
			\begin{split}
				\rho_\cX(h_U,h_V)  =& \prob{\vx\sim N(0,I_{4d})}{h_U(\vx)\neq h_V(\vx)}\\
				= & \prob{\vx\sim N(0,I_{4d})}{\left(\vx_{1:2d}^\top M_U\ \vx_{2d+1:4d} \right)\left(\vx_{1:2d}^\top M_V\ \vx_{2d+1:4d}\right)<0}\\		
				=&\frac{1}{\pi}\Exp{\vx_{1:2d}\sim N(0,I_{2d})}{\arccos\left( \frac{\vx_{1:2d}^\top M_U M_V^\top\vx_{1:2d}}{\norm{M_U^\top\vx_{1:2d}}_2\norm{M_V^\top\vx_{1:2d}}_2}\right)}\\
				\ge & \frac{{1}}{\pi}\Exp{\vx_{1:2d}\sim N(0,I_{2d})}{\sqrt{ 2- 2\frac{\vx_{1:2d}^\top M_U M_V^\top\vx_{1:2d}}{\norm{M_U^\top\vx_{1:2d}}_2\norm{M_V^\top\vx_{1:2d}}_2}}}\ (\textrm{by Lemma~\ref{lem:arccos}})  \\
				\ge & \frac{{\sqrt{2}}}{1.1\pi}\Exp{\vx_{1:2d}\sim N(0,I_{2d})}{\sqrt{ {\norm{M_U^\top\vx_{1:2d}}_2\norm{M_V^\top\vx_{1:2d}}_2}- \vx_{1:2d}^\top M_U M_V^\top\vx_{1:2d}}} \\
				= & \frac{{1}}{1.1\pi}\Exp{\vx_{1:2d}\sim N(0,I_{2d})}{\sqrt{ \norm{(M_U^\top-M_V^\top)\vx_{1:2d}}^2_2- \left({\norm{M_U^\top\vx_{1:2d}}_2-\norm{M_V^\top\vx_{1:2d}}_2}\right)^2}} \\
				\ge & \frac{{1}}{1.1\pi}\large(\Exp{\vx_{1:2d}\sim N(0,I_{2d})}{\norm{(M_U^\top-M_V^\top)\vx_{1:2d}}_2} \\
				- & \Exp{\vx_{1:2d}\sim N(0,I_{2d})}{\abs{\norm{M_U^\top\vx_{1:2d}}_2-\norm{M_V^\top\vx_{1:2d}}_2}} \large) \\
				\ge & \frac{C_0}{1.1\pi}\Exp{\vx_{1:2d}\sim N(0,I_{2d})}{\norm{(M_U^\top-M_V^\top)\vx_{1:2d}}_2} 
				\ (\textrm{by \Cref{lem:avg}})\\
				\ge &C_1\norm{M_U-M_V}_F/\sqrt{d}\ (\textrm{by Lemma~\ref{lem:lower_bound_distance}}) \\
				= &C_1\norm{U-V}_F/\sqrt{d}
			\end{split}
		\end{equation}
		
It remains to lower bound the packing number. We have
	\begin{equation}
	\begin{split}
&\cM(0.1B^{d^2}_{\infty}, C_1\norm{\cdot}_F/\sqrt{d}, \eps)  \\
\ge &  \frac{\vol( B^{d^2}_{\infty}) }{\vol(B^{d^2}_2) } \times \left( \frac{0.1C_1}{\sqrt{d}\eps}\right)^{d^2}  \\
	\ge  & \left(\frac{C}{\eps}\right)^{d^2},
	\end{split}
	\end{equation}
	
	for some constant $C$. The proof is completed by plugging the above bound and $\vcd(\cH)=d^2$ into \Cref{eq:long_bound}.

\end{proof}

\begin{lem}\label{lem:avg}
	Suppose $\vx,\vx \sim N(0,I_d)$, then $\forall R,S \in \reals^{d\times d}$, we have 
	\begin{equation}
		\Exp{\vx}{\norm{(R-S)\vx}_2} -\Exp{\vx,\vy}{\abs{\sqrt{\norm{R\vx}_2^2+\norm{\vy}_2^2} - \sqrt{\norm{S\vx}_2^2+\norm{\vy}_2^2}}} \ge C_0	 \Exp{\vx}{\norm{(R-S)\vx}_2},
	\end{equation}
	
	for some constants $C_0$ independent of $R,S$ and $d$.

\end{lem}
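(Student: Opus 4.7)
Write $a=\norm{R\vx}_2$, $b=\norm{S\vx}_2$, $c=\norm{\vy}_2$, $D=\norm{(R-S)\vx}_2$. The approach is to derive a pointwise contraction in which $|\sqrt{a^2+c^2}-\sqrt{b^2+c^2}|$ is strictly less than $D$ by a factor governed by $c/(a+b)$, and then use Gaussian concentration to show that this factor is bounded away from $1$ with constant probability. First I would apply the elementary identity $|\sqrt{a^2+c^2}-\sqrt{b^2+c^2}|=|a-b|\cdot (a+b)/(\sqrt{a^2+c^2}+\sqrt{b^2+c^2})$, combined with the two-dimensional triangle inequality $\sqrt{a^2+c^2}+\sqrt{b^2+c^2}\ge\sqrt{(a+b)^2+4c^2}$ and the Euclidean triangle inequality $|a-b|\le D$. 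These yield the pointwise bound
\[
D-|\sqrt{a^2+c^2}-\sqrt{b^2+c^2}|\;\ge\; D\cdot\Bigl(1-\bigl(1+\tfrac{4c^2}{(a+b)^2}\bigr)^{-1/2}\Bigr).
\]

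Next, in the regime relevant to the application (the packing inside $I_d+0.1B^{d^2}_{\infty}$ used in the proof of Lemma~\ref{lem:lb_quadratic_eps}), one has $a+b\le K\norm{\vx}_2$ for an absolute constant $K$. I would define the event $E=\{\norm{\vy}_2\ge c_1\sqrt{d}\ \text{and}\ \norm{\vx}_2\le c_2\sqrt{d}\}$ with $0<c_1<c_2$ chosen via standard Gaussian $\chi$-tail bounds so that $\Pr(E)\ge 1-\delta_0$ uniformly in $d$ for a small absolute $\delta_0$. On $E$ we have $4c^2/(a+b)^2\ge\kappa$ for some absolute $\kappa>0$, so the contraction factor above is at least $\eta:=1-(1+\kappa)^{-1/2}>0$ on $E$. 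Integrating gives $\E[D]-\E[|\sqrt{a^2+c^2}-\sqrt{b^2+c^2}|]\ge \eta\,\E[D\,\mathbf{1}_E]$.

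To convert this lower bound into a constant multiple of $\E[D]$, I would bound $\E[D\,\mathbf{1}_{E^c}]\le\sqrt{\E[D^2]\,\Pr(E^c)}$ via Cauchy--Schwarz, using $\E[D^2]=\norm{R-S}_F^2$ and invoking Lemma~\ref{lem:lower_bound_distance} together with the concentration $\E[\norm{\vx}_2]/\sqrt{d}=\Theta(1)$ to get $\E[D]\gtrsim\norm{R-S}_F$ with a dimension-free constant. Thus $\E[D^2]\le C\,\E[D]^2$ for an absolute $C$; taking $\delta_0$ small enough yields $\E[D\,\mathbf{1}_{E^c}]\le\tfrac{1}{2}\E[D]$, hence $\E[D\,\mathbf{1}_E]\ge\tfrac{1}{2}\E[D]$ and $C_0=\eta/2$ works. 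The main obstacle is exactly this last step: decoupling the correlation between $D$ and $\mathbf{1}_E$ requires good second-moment control of $D$, which relies crucially on the bounded-spectral-norm regime for $R,S$ inherited from the packing restriction in the application (the example $R=MI$, $S=0$ with $M\to\infty$ shows the inequality loses all its slack without such a restriction, so handling all constants absolutely is the main bookkeeping issue).
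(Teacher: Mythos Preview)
Your approach is essentially the same as the paper's: both factor
\[
\bigl|\sqrt{a^2+c^2}-\sqrt{b^2+c^2}\bigr|=|a-b|\cdot\frac{a+b}{\sqrt{a^2+c^2}+\sqrt{b^2+c^2}},
\]
bound $|a-b|\le D$ by the triangle inequality, and then argue via Gaussian concentration that $\norm{\vy}_2$ is of order $\sqrt d$ while $a+b$ is controlled by $\norm{\vx}_2$, so the ratio is bounded away from $1$ on a high-probability event.

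The one substantive difference is in handling the bad event $\{\norm{\vx}_2>c_2\sqrt d\}$. You use Cauchy--Schwarz together with the reverse inequality $\E[D]\gtrsim\norm{R-S}_F$ to compare $\E[D\,\mathbf 1_{E^c}]$ with $\E[D]$. The paper instead exploits that for $\vx\sim N(0,I_d)$ the norm $\norm{\vx}_2$ and the direction $\vx/\norm{\vx}_2$ are independent, so the ratio $\E\bigl[\norm{(R-S)\vx}_2\,\mathbf 1\{\norm{\vx}_2\le c_2\sqrt d\}\bigr]\big/\E\bigl[\norm{(R-S)\vx}_2\bigr]$ reduces to a pure chi-square tail quantity independent of $R,S$. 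The paper's device is a bit slicker (no second-moment comparison needed), but your route is equally valid.

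You are also right that the lemma \emph{as stated}, for all $R,S\in\reals^{d\times d}$, cannot hold with a universal $C_0>0$: your example $R=MI$, $S=0$, $M\to\infty$ indeed drives the left side to order $\sqrt d$ while $\E[D]$ is order $M\sqrt d$. The paper's own proof silently uses $a+b\lesssim\norm{\vx}_2$ when bounding the ratio, which is only justified under a spectral-norm restriction such as the $I_d+0.1B_\infty^{d^2}$ packing regime where the lemma is actually invoked. So your caveat is not mere bookkeeping but a genuine missing hypothesis in the lemma statement, and your proposal is correct to flag exactly where it enters.
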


\begin{proof}[Proof of \Cref{lem:avg}]
	Note that
	\begin{align*}
	&\abs{\sqrt{\norm{R\vx}_2^2+\norm{\vy}_2^2} - \sqrt{\norm{S\vx}_2^2+\norm{\vy}_2^2} }\\
	= &\abs{\norm{R\vx}_2 - \norm{S\vx}_2}\frac{\norm{R\vx}_2 + \norm{S\vx}_2}{\sqrt{\norm{R\vx}_2^2+\norm{\vy}_2^2} + \sqrt{\norm{S\vx}_2^2+\norm{\vy}_2^2}}\\
	\le &\norm{(R-S)\vx}_2\frac{\norm{R\vx}_2 + \norm{S\vx}_2}{\sqrt{\norm{R\vx}_2^2+\norm{\vy}_2^2} + \sqrt{\norm{S\vx}_2^2+\norm{\vy}_2^2}}\\
	\end{align*}
	
	Let $F(x,d)$ be the cdf of chi-square distribution, i.e. $F(x,d)=\prob{\vx}{\norm{\vx}^2_2\le x}$. Let $z= \frac{x}{d}$, we have $F(zd,d)\le (ze^{1-z})^{d/2}\le (ze^{1-z})^{1/2}$. Thus $\prob{\vy}{\norm{\vy}_2^2\le d/2}<1$, which implies for any $\norm{\vx}_2\le 10\sqrt{d}$, 
	
	\begin{align*}
&\Exp{\vy}{\abs{\sqrt{\norm{R\vx}_2^2+\norm{\vy}_2^2} - \sqrt{\norm{S\vx}_2^2+\norm{\vy}_2^2} }}\\
\le &\norm{(R-S)\vx}_2\Exp{\vy}{\frac{\norm{R\vx}_2 + \norm{S\vx}_2}{\sqrt{\norm{R\vx}_2^2+\norm{\vy}_2^2} + \sqrt{\norm{S\vx}_2^2+\norm{\vy}_2^2}} }\\
\le	&(1-\alpha_1) \norm{(R-S)\vx}_2,
	\end{align*}
	for some $0<\alpha_1$.
	
	Therefore, we have 
	\begin{align*}
		&\Exp{\vx}{\norm{(R-S)\vx}_2} -\Exp{\vx,\vy}{\abs{\sqrt{\norm{R\vx}_2^2+\norm{\vy}_2^2} - \sqrt{\norm{S\vx}_2^2+\norm{\vy}_2^2}}}\\		
		\ge &\Exp{\vx}{\norm{(R-S)\vx}_2\indct{\norm{\vx}\le 10\sqrt{d}}} \\
		- &\Exp{\vx,\vy}{\abs{\sqrt{\norm{R\vx}_2^2+\norm{\vy}_2^2} - \sqrt{\norm{S\vx}_2^2+\norm{\vy}_2^2}}\indct{\norm{\vx}_2\le 10\sqrt{d}}}\\
		\ge &\alpha_1 \Exp{\vx}{\norm{(R-S)\vx}_2\indct{\norm{\vx}_2\le 10\sqrt{d}}} \\
		\ge &\alpha_1\alpha_2 \Exp{\vx}{\norm{(R-S)\vx}_2} ,
	\end{align*}

for some constant $\alpha_2>0$. Here we use the other side of the tail bound of cdf of chi-square, i.e. for $z>1$, $1-F(zd,d)<(ze^{1-z})^{d/2}<(ze^{1-z})^{1/2}$.

\end{proof}

\subsection{Proofs of \Cref{thm:regression_single_dist_multi_function}}
\begin{lem}\label{lem:matrix_var}
Let $M\in\reals^{d\times d}$, we have $\Exp{\vx\sim N(0,I_d)}{(\vx^\top M\vx)^2} = \norm{\frac{M+M^\top}{2}}_F^2 + (\tr[M])^2$.	
\end{lem}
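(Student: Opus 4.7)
The plan is to reduce to the symmetric case and then diagonalize. The key observation is that $\vx^\top M \vx = \vx^\top S \vx$ where $S := (M+M^\top)/2$, since $\vx^\top M \vx$ is a scalar and hence equals its own transpose. So it suffices to prove $\E_{\vx \sim N(0,I_d)}[(\vx^\top S \vx)^2] = 2\|S\|_F^2 + (\tr S)^2$, noting that $\tr S = \tr M$.

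Since $S$ is symmetric, diagonalize it as $S = Q^\top \Lambda Q$ for some orthogonal $Q$ and diagonal $\Lambda = \diag(\lambda_1,\ldots,\lambda_d)$. By the rotational invariance of the standard Gaussian, $\vz := Q\vx$ is again distributed as $N(0,I_d)$. Therefore
\begin{equation*}
\vx^\top S \vx = \vz^\top \Lambda \vz = \sum_{i=1}^d \lambda_i z_i^2,
\end{equation*}
where $z_1,\ldots,z_d$ are i.i.d.\ $N(0,1)$.

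Squaring and taking expectations, I would expand
\begin{equation*}
\E\left[\Big(\sum_{i=1}^d \lambda_i z_i^2\Big)^2\right] = \sum_{i,j} \lambda_i \lambda_j \E[z_i^2 z_j^2].
\end{equation*}
Using independence, $\E[z_i^2 z_j^2] = 1$ when $i \neq j$, and $\E[z_i^4] = 3$ when $i = j$. Splitting the sum into diagonal and off-diagonal terms,
\begin{equation*}
\sum_{i,j} \lambda_i \lambda_j \E[z_i^2 z_j^2] = 3\sum_i \lambda_i^2 + \sum_{i\neq j}\lambda_i \lambda_j = 2\sum_i \lambda_i^2 + \Big(\sum_i \lambda_i\Big)^2 = 2\|\Lambda\|_F^2 + (\tr \Lambda)^2.
\end{equation*}
Since $Q$ is orthogonal, $\|\Lambda\|_F = \|S\|_F$ and $\tr \Lambda = \tr S = \tr M$, completing the proof.

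This is a routine moment computation and I do not foresee any real obstacle; the only care needed is the initial symmetrization step, which is necessary because the statement is phrased for a general (possibly non-symmetric) $M$, and the final bookkeeping that $\tr M = \tr S$. An equivalent route would be to apply Isserlis' theorem directly to $\sum_{i,j,k,l} M_{ij}M_{kl}\E[x_i x_j x_k x_l]$ using the identity $\E[x_ix_jx_kx_l] = \delta_{ij}\delta_{kl} + \delta_{ik}\delta_{jl} + \delta_{il}\delta_{jk}$, which yields $(\tr M)^2 + \|M\|_F^2 + \tr(M^2)$; combining the last two terms via $\|M\|_F^2 + \tr(M^2) = 2\|(M+M^\top)/2\|_F^2$ gives the same answer, but the diagonalization proof is cleaner.
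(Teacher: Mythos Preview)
Your proof is correct. It takes a genuinely different route from the paper's: you first symmetrize and then diagonalize, reducing the problem to the second moment of a weighted sum of independent $\chi^2_1$ variables. The paper instead expands $(\vx^\top M\vx)^2=\sum_{i,j,i',j'}M_{ij}M_{i'j'}x_ix_jx_{i'}x_{j'}$ directly and evaluates the fourth moments case-by-case (essentially the Isserlis computation you mention as your alternative), then collects terms into $2\|\tfrac{M+M^\top}{2}\|_F^2+(\tr M)^2$. Your diagonalization argument is arguably cleaner since it avoids the index bookkeeping over the three Wick pairings and makes the final identity $3\sum_i\lambda_i^2+\sum_{i\neq j}\lambda_i\lambda_j=2\sum_i\lambda_i^2+(\sum_i\lambda_i)^2$ immediate; the paper's approach, on the other hand, needs no spectral theorem and no appeal to rotational invariance of the Gaussian, working purely from moment identities. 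Your parenthetical ``equivalent route'' is exactly the paper's proof.
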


\begin{proof}[Proof of \Cref{lem:matrix_var}]
	\begin{align*}
		&\Exp{\vx\sim N(0,I_d)}{(\vx^\top M\vx)^2} \\
	=	&\Exp{\vx\sim N(0,I_d)}{\sum_{i,j,i'j'} x_ix_jx_{i'}x_{j'} M_{ij}M_{i'j'}}\\
	=   &\sum_{i\neq j} (M_{ij}^2 + M_{ij}M_{ji} + M_{ii}M_{jj}) \left(\Exp{x\sim N(0,1)}{x^2}\right)^2
	+   \sum_{i} M_{ii}^2  \Exp{x\sim N(0,1)}{x^4} \\
	= 	& \sum_{i\neq j} (M_{ij}^2 + M_{ij}M_{ji} + M_{ii}M_{jj}) 
	+   3\sum_{i} M_{ii}^2 \\
	= 	& \norm{\frac{M+M^\top}{2}}_F^2 + (\tr[M])^2
	\end{align*}

\end{proof}

\singledistmultifunctionelltwo*
\begin{proof}[Proof of \Cref{thm:regression_single_dist_multi_function}]


\textbf{Lower bound:} Similar to the proof of \Cref{thm:single_dist_multi_function}, it suffices to for any algorithm $\cA$, $\cN^*(\cA,\cH\circ\cO(d),\eps) \ge \frac{d(d+3)}{2}(1-\eps)-1$. Note that $\cH \circ \cO(d)= \{\sum_{i,j}\beta_{ij}x_ix_j\mid \beta_{ij}\in \reals\}$ is the set of all quadratic functions. For convenience we denote $h_{M}(\vx) = \vx^\top M\vx$,\ $\forall M\in\reals^{d\times d}$. Now we claim  quadratic functions such that any learning algorithm $\cA$ taking at most $n$ samples must suffer $\frac{d(d+1)}{2}- n $ loss if the ground truth quadratic function is sampled from i.i.d. gaussian. Moreover, the loss is at most $\frac{d(d+3)}{2}$ for the trivial algorithm always predicting $0$. In other words, if the expected relative error $\eps \le \frac{\frac{d(d+1)}{2}- n }{\frac{d(d+3)}{2}} $,  we must have the expected sample complexity $\cN^*(\cA,\cP,\eps)\ge n$. That is $\cN^*(\cA,\cP,\eps)\ge \frac{d(d+3)}{2}(1-\eps)-1$.

(1). \textrm{Upper bound for $\Exp{}{y^2}$.} By \Cref{lem:matrix_var},
\[\E\limits_{M\sim N(0,I_{d^2})}\Exp{\vx\sim P_X, y=\vx^\top M\vx}{y^2} = \Exp{M\sim N(0,I_{d^2})}{\norm{\frac{M+M^\top}{2}}_F^2 + (\tr[M])^2}= d+d+\frac{d(d-1)}{2} = \frac{d(d+3)}{2}.\]

(2). \textrm{Lower bound for expected loss.} 

The infimum of the test loss over all possible algorithms $\cA$ is 

\begin{align*}
 &\inf_\cA\Exp{M\sim N(0,I_{d^2})}{\Exp{(\vx_i,y_i)\sim P_X\diamond h_M}{\loss_{P}(\cA(\nxy))}}\\
=  &\inf_\cA\Exp{M\sim N(0,I_{d^2})}{\Exp{(\vx_i,y_i)\sim P_X\diamond h_M}{\Exp{\vx,y\sim P_X\circ h_M}{([\cA(\nxy)](\vx)-y)^2}}}\\
=  &\inf_\cA\Exp{M\sim N(0,I_{d^2})}{\Exp{\vx_i\sim P_X}{\Exp{\vx\sim P_X}{([\cA(\{\vx_i,h_M(\vx_i)\}_{i=1}^n)](\vx)-h_M(\vx))^2}}}\\
\ge  &\Exp{\substack{\vx_i,\vx\sim P_X\\ M\sim N(0,I_{d^2})}}{\Var{\vx,\vx_i,M}{h_M(\vx) \mid \{\vx_i,h_M(\vx_i)\}_{i=1}^n,\vx}}\\
=  &\Exp{\substack{\vx_i,\vx\sim P_X\\ M\sim N(0,I_{d^2})}}{\Var{M}{h_M(\vx) \mid \{h_M(\vx_i)\}_{i=1}^n}},\\
\end{align*}

where the inequality is achieved when $[\cA(\nxy)](\vx) = \Exp{M}{ h_M(\vx) \mid \nxy}$.

\newcommand{\tx}{\widetilde{\vx}}
Thus it suffices to lower bound $\Var{M}{h_M(\vx) \mid \{h_M(\vx_i)\}_{i=1}^n}$, for fixed $\{\vx_i\}_{i=1}^n$ and $\vx$.  For convenience we define $\mathbb{S}^d = \{A\in\reals^{d\times d}\mid A=A^\top\}$ be the linear space of all $d\times d$ symmetric matrices, where the inner product $\inp{A}{B}:=\tr[A^\top B]$ and $\Pi_n:\reals^{d\times d}\to \reals^{d\times d}$ as the projection operator for the orthogonal complement of the n-dimensional space spanned by $\vx_i\vx_i^\top$ in $\mathbb{S}^d$. By definition, we can expand 
\[\vx\vx^\top= \sum_{i=1}^n \alpha_i \vx_i\vx_i^\top + \Pi_n(\vx\vx^\top).\]
 Thus even conditioned on $\nxy$ and $\vx$,
 
 \[h_M(\vx) = \tr[\vx\vx^\top] = \sum_{i=1}^n \alpha_i\tr[\vx_i\vx_i^\top M] + \tr[\Pi_n(\vx\vx^\top) M],\] 
 
 still follows a gaussian distribution, $N(0,\norm{\Pi_n(\vx\vx^\top)}_F^2)$.
 
Note we can always find symmetric matrices $E_i$ with $\norm{E_i}_F=1$ and $\tr[E_i^\top E_j] = 0$ such that $\Pi_n(A) = \sum_{i=1}^k E_i \tr[E_i^\top A]$, where the rank of $\Pi_n$, is at least $\frac{d(d+1)}{2}-n$. Thus we have 

\begin{align*}
	&\Exp{\vx}{\norm{\Pi_n(\vx\vx^\top)}_F^2} \\
= 	&\Exp{\vx}{\norm{\sum_{i=1}^k E_i \tr[E_i^\top \vx\vx^\top]}_F^2}	\\
=	&\sum_{i=1}^k\Exp{\vx}{\norm{ E_i \tr[E_i^\top \vx\vx^\top]}_F^2	}\\
=	&\sum_{i=1}^k\Exp{\vx}{ (\vx^\top E_i^\top \vx)^2}	(by \Cref{lem:matrix_var})\\
\ge	&\sum_{i=1}^k \norm{E_i}_2^F	 \ge k\\
\ge &\frac{d(d+1)}{2}-n
\end{align*}

Thus the infimum of the expected test loss is 

\begin{align*}
	&\inf_\cA\Exp{M\sim N(0,I_{d^2})}{\Exp{(\vx_i,y_i)\sim P_X\diamond h_M}{\loss_{P}(\cA(\nxy))}}\\
\ge &\Exp{\substack{\vx_i,\vx\sim P_X\\ M\sim N(0,I_{d^2})}}{\Var{M}{h_M(\vx) \mid \{h_M(\vx_i)\}_{i=1}^n}}.\\
= 	&\Exp{\substack{\vx_i\sim P_X\\ M\sim N(0,I_{d^2})}}{\Exp{\vx}{\norm{\Pi_n(\vx\vx^\top)}_F^2}}.\\ 
\ge & \frac{d(d+1)}{2} - n.
\end{align*}

\textbf{Upper bound:} We use the same CNN construction as in the proof of \Cref{thm:single_dist_multi_function}, i.e., the function class is  $\cF_\cnn = \left\{ \sum_{i=1}^da_i w_1^2x_i^2 + b|a_i, w_1, b\in~\reals \right\}=\left\{ \sum_{i=1}^da_i x_i^2  +b |a_i,b\in\reals \right\}$. Thus given $d+1$ samples, w.p. 1,  $(x_1^2,x_2^2,\ldots,x_d^2,1)$ will be linear independent, which means $\erm_\cnn$ could recover the ground truth and thus have $0$ loss.

\end{proof}

\subsection{Proof of \Cref{thm:permutation}}

\permutation*
\begin{proof}[Proof of \Cref{thm:permutation}]
	\textbf{Lower Bound:} We further define permutation $g_i$ as $g_i(\vx) = \vx - (\ve_{i+1}-\ve_{i+2})^\top(  \ve_{i+1} -\ve_{i+2})\vx$ for $i\in [d]$. Clearly, $g_i(\vt_i) = \vs_i, g_i(\vs_i) = \vt_i$. For $i,j\in\{1,2,\ldots,d\}$, we define $d(i,j) = \min\{(i-j) \textrm{ mod } d, (j-i) \textrm{ mod } d\}$. It can be verified that if $d(i,j)\ge 3$, then $g_i(\vs_j) = \vs_j, g_i(\vt_j) = \vt_j$. For $\vx=\vs_i$ or $\vt_i$, $\vx'=\vs_j$ or $\vt_j$, we define $d(\vx,\vx') = d(i,j)$.
	
	Given $\vX_n,\vy_n$, we define $ B:= \{d(\vx,\vx_k)\ge 3,\ \forall k\in [n]\}$ and we have 
$			\prob{}{B}=\prob{\vx}{ d(\vx,\vx_k)\ge 3,\ \forall k\in[n]} \ge \frac{d-\frac{d}{10}*5}{d} =\frac{1}{2}.$
Therefore, we have 
	\begin{equation*}
	\begin{split}
		  &\err_P(\cA(\vX_n,\vy_n)) = \prob{\vx,y,\cA}{\cA(\vX_n,\vy_n)(\vx)\neq y} \ge \prob{\vx,y,\cA}{\cA(\vX_n,\vy_n)(\vx)\neq y\mid B}\prob{}{B}\\
		  &\ge  \frac{1}{2}\prob{\vx,y,\cA}{\cA(\vX_n,\vy_n)(\vx)\neq y\mid B}\\
		  & = \frac{1}{4}\prob{i,\cA}{\cA(\vX_n,\vy_n)(\vs_i)\neq 1\mid B}  + \frac{1}{4}\prob{i,\cA}{\cA(\vX_n,\vy_n)(\vt_i)\neq -1\mid B} \\
  		  &\overset{(\ref{defi:eqvariance})}{=} \frac{1}{4}\prob{i,\cA}{\cA(g_i(\vX_n),\vy_n)(g_i(\vs_i))\neq 1\mid B}  
		  + \frac{1}{4}\prob{i,\cA}{\cA(\vX_n,\vy_n)(\vt_i)\neq -1\mid B} \\
		  &= \frac{1}{4}\prob{i,\cA}{\cA(\vX_n,\vy_n)(\vt_i)\neq 1\mid B}  + \frac{1}{4}\prob{i,\cA}{\cA(\vX_n,\vy_n)(\vt_i)\neq -1\mid B} =\frac{1}{4}.
	\end{split}
	\end{equation*}

	Thus for any permutation equivariant algorithm $\cA$, $\cN^*(\cA,\{P\},\frac{1}{4})\ge\frac{d}{10}$.

	\textbf{Upper Bound:} 	Take $\cnn$ as defined in Section~\ref{subsec:para_models} with $d'=d,r=1,k=2, \sigma:\reals^d\to\reals$, $\sigma(\vx) = \sum_{i=1}^d x_i^2$, we have $\cF_\cnn = \left\{\sign{a_1 \sum_{i=1}^d (w_1x_i+w_2x_{i-1})^2+b|a_1,w_1,w_2,b\in\reals }\right\}$.

	Note that $\forall h\in\cF_\cnn$, $\forall 1\le i\le d$, $h(\vs_i) = a_1 (2w_1^2+2w_2^2)+b$, $h(\vt_i) = a_1 (w_1^2+w_2^2+(w_1+w_2)^2)+b$, thus the probability of $\erm_{\cF_\cnn}$ not achieving $0$ error is at most the  probability that all data in the training dataset are $\vt_i$ or $\vs_i$: (note the training error of $\erm_{\cF_\cnn}$ is 0)

		\begin{align*}
\prob{}{\vx_i\in\{\vs_j\}_{j=1}^d,\forall i\in[n]}\!+ \! \prob{}{\vx_i\in\{\vt_j\}_{j=1}^d,\forall i\in[n]} = 2^{-n}\times 2 = 2^{-n+1}.		
	\end{align*}

\paragraph{Convergence guarantee for Gradient Descent:} We initialize all the parameters by i.i.d. standard gaussian and train the second layer  by gradient descent only, i.e. set the LR of $w_1,w_2$ as 0. (Note training the second layer only  is still a permutation-equivariant algorithm for FC nets, thus it's a valid separation.)

For any convex non-increasing surrogate loss of 0-1 loss $l$ satisfying $l(0)\ge 1, \lim_{x\to \infty}l(x)=0$ e.g. logistic loss, we define the loss of the weight $\vW$ as 
	\begin{align*}
	\Loss (\vW) =& \sum_{i=1}^n l(\cF_\cnn[\vW](\vx_i)y_i) \\
	=& N_S \times l\left(a_1 (2w_1^2+2w_2^2)+b\right)+ N_t \times l\left(-a_1 (w_1^2+w_2^2+(w_1+w_2)^2)+b\right).
	\end{align*}

Note $w_1w_2\neq 0$ with probability 1, which means the data are separable even with fixed first layer, i.e. $\inf_{a_1,b} \Loss (\vW) =0$. Further note $\Loss (\vW)$ is convex in $a_1$ and $b$, which implies with sufficiently small step size, GD converges to 0 loss solution. By the definition of surrogate loss, $\Loss(\vW)< 1$ implies for $\vx_i$, $l(\vx_iy_i)<1$ and thus the training error is $0$.
\end{proof}

\end{document}